\definecolor{shadecolor}{gray}{0.9}
\lstdefinestyle{mystyle}{
    commentstyle=\color{OliveGreen},
    numberstyle=\tiny\color{black!60},
    stringstyle=\color{BrickRed},
    basicstyle=\ttfamily\scriptsize,
    breakatwhitespace=false,
    breaklines=true,
    captionpos=b,
    keepspaces=true,
    numbers=none,
    numbersep=5pt,
    showspaces=false,
    showstringspaces=false,
    showtabs=false,
    tabsize=2
}
\newcommand{\acaps}[1]{{\scshape #1}}
\newacronym{hmc}{\acaps{hmc}}{Hamiltonian Monte-Carlo}
\newacronym{sghmc}{\acaps{sghmc}}{Stochastic Gradient Hamiltonian Monte-Carlo}
\newacronym{vi}{\acaps{vi}}{Variational Inference}
\newcommand{\bof}{\mathbf{f}}
\newcommand{\bu}{\mathbf{u}}
\newcommand{\bw}{\mathbf{w}} 
\newcommand{\bx}{\mathbf{x}}
\newcommand{\calA}{{\mathcal{A}}}
\newcommand{\calN}{{\mathcal{N}}}
\newcommand{\bbE}{\mathbb{E}}
\newcommand{\ovec}{\mathds{1}}
\newcommand{\Real}{{\mathbb R}}
\theoremstyle{plain}% default
\theoremstyle{definition}
\theoremstyle{remark}
\newcommand{\dee}{\mathrm{d}}
\DeclareMathOperator*{\argmin}{arg\,min}
\newcommand{\KL}{D_\mathrm{KL}}
\newcommand{\tr}{^\top}
\newcommand{\normal}{\mathcal{N}}
\newcommand{\iidsim}{\overset{\mathrm{i.i.d.}}{\sim}}
\def\[#1\]{\begin{align}#1\end{align}}
\newcommand{\spm}[1]{\scriptstyle{\pm#1}}
\title{On Divergence Measures for Bayesian Pseudocoresets}
\author{
    Balhae~Kim$^{1}$,
    Jungwon~Choi$^{1}$,
    Seanie~Lee$^{1}$,
    Yoonho~Lee$^{2}$,
    Jung-Woo~Ha$^{3}$,
    Juho~Lee$^{1,4}$ \\
    KAIST$^1$, Stanford University$^2$, NAVER AI Lab$^3$, AITRICS$^4$\\
    \texttt{\{balhaekim,\,jungwon.choi,\,lsnfamily02\}@kaist.ac.kr}, \\ \texttt{yoonho@stanford.edu}, \texttt{jungwoo.ha@navercorp.com}, \texttt{juholee@kaist.ac.kr} \\
}
\begin{document}

\maketitle

\begin{abstract}

A Bayesian pseudocoreset is a small synthetic dataset for which the posterior over parameters approximates that of the original dataset. While promising, the scalability of Bayesian pseudocoresets is not yet validated in realistic problems such as image classification with deep neural networks. On the other hand, dataset distillation methods similarly construct a small dataset such that the optimization using the synthetic dataset converges to a solution with performance competitive with optimization using full data. Although dataset distillation has been empirically verified in large-scale settings, the framework is restricted to point estimates, and their adaptation to Bayesian inference has not been explored. This paper casts two representative dataset distillation algorithms as approximations to methods for constructing pseudocoresets by minimizing specific divergence measures: reverse KL divergence and Wasserstein distance. Furthermore, we provide a unifying view of such divergence measures in Bayesian pseudocoreset construction. Finally, we propose a novel Bayesian pseudocoreset algorithm based on minimizing forward KL divergence. Our empirical results demonstrate that the pseudocoresets constructed from these methods reflect the true posterior even in high-dimensional Bayesian inference problems.
\end{abstract}

\section{Introduction}
\label{main:sec:introduction}

One of the main ingredients for modern statistical machine learning is large-scale data, which enables the learning of powerful and flexible models such as deep neural networks when handled correctly~\citep{brown2020language,Zhai2021vit}. However, training a neural network on larger datasets usually requires many nontrivial choices, ranging from architecture choice, hyperparameter optimization, and infrastructure challenges~\citep{zhang2022opt}. Aside from requiring enormous computing resources for training, large-scale data often contains sensitive information that must not be shared publicly~\citep{dwork2014algorithmic}. This motivates the construction of \emph{coresets}~\citep{welling2009herding,Huggins2016coresets}, a sparse subset of a large-scale dataset that summarizes essential features of the original dataset. In the context of Bayesian inference, the posterior conditioned on an essential coreset should be similar to the exact posterior conditioned on the full dataset. 

Coreset construction becomes more difficult with high-dimensional data. A previous work~\citep{manousakas2020bayesian} shows that even the best coreset becomes suboptimal at large scales: the divergence between the posterior conditioned on the optimal coreset and the exact posterior grows with the dimension of the data. Motivated by this weakness of strict coresets, \citet{manousakas2020bayesian} proposes to construct a \emph{Bayesian pseudocoreset}: a synthetic dataset represented as a set of learnable parameters and trained to minimize the divergence between the coreset posterior and the full-data posterior. Compared to coresets, pseudocoresets scale much better with data dimension and achieve significantly lower posterior approximation error. However, the KL divergence minimization in pseudocoreset learning requires the construction of approximate posteriors on the fly during the learning process, making this approach hard to apply to high-dimensional models such as deep neural networks. 

On the other hand, \emph{dataset distillation}~\citep{wang2018dataset} aims to distill a large-scale dataset into a small synthetic dataset such that the test performance of a model trained on the distilled data is comparable to that of a model trained on the original data. Such methods have achieved remarkable performance in real-world problems with high-dimensional data and deep neural networks. Although dataset distillation methods have a similar motivation to pseudocoresets, the two methods optimize different objectives. Whereas pseudocoresets minimize the divergence between coreset and full-data posteriors, dataset distillation considers a non-Bayesian setting with heuristically set objective functions such as matching the gradients of loss functions~\citep{zhao2020dcgm} or matching the parameters obtained from optimization trajectories~\citep{cazenavette2022tm}. 
Such objectives have no direct Bayesian analogue due to their heuristic nature, and more theoretical grounding can advance our understanding of dataset distillation and the empirical performance of Bayesian pseudocoreset methods.

In this paper, we provide a unifying view of Bayesian pseudocoresets and dataset distillation to take advantage of both approaches. We first study various choices of divergence measures as objectives for learning pseudocoresets. While \citet{manousakas2020bayesian} minimizes the reverse KL divergence between the pseudocoreset posterior distribution and the full data posterior distribution, we show that alternative divergence measures such as forward KL divergence and Wasserstein distance are also effective in practice.
Based on this perspective, we re-interpret existing dataset distillation algorithms~\citep{zhao2020dcgm,cazenavette2022tm} approximations to the Bayesian pseudocoresets learned by minimizing reverse KL and Wasserstein distance, with specific choices of variational approximations for coreset posteriors. This connection justifies the heuristically chosen learning objectives of dataset distillation algorithms, and at the same time, provides a theoretical background for using the distilled datasets obtained from the such procedure for Bayesian inference. Conversely, Bayesian pseudocoresets benefit from this connection by borrowing various ideas and tricks used in the dataset distillation algorithms to make them scale for complex tasks. For instance, the variational posteriors we identified by recasting dataset distillation algorithms as Bayesian pseudocoreset learning can be used for Bayesian pseudocoresets with various choices of divergence measures. Also, we can adapt the idea of reusing pre-computed optimization trajectories~\citep{cazenavette2022tm} which have already been shown to work at large scales.

We empirically compare pseudocoreset algorithms based on three different divergence measures on high-dimensional image classification tasks. Our results demonstrate that we can efficiently and scalably construct Bayesian pseudocoresets with which MCMC algorithms such as Hamiltonian Monte-Carlo (HMC)~\citep{duane1987hybrid,neal2010mcmc} or Stochastic Gradient Hamiltonian Monte-Carlo (SGHMC)~\citep{chen2014sghmc} accurately approximate the full posterior.

\section{Background}
\label{main:sec:background}
\subsection{Bayesian Pseudocoresets}
Denote observed data as $\bx = \{x_n\}_{n=1}^{N}$. Given a probabilistic model indexed by a parameter $\theta$ with some prior distribution $\pi_0$, we are interested in the posterior conditioned on the full data $\bx$,
\[
\pi_\bx(\theta) = \frac{1}{Z(\bx)}\exp\left(\sum_{n=1}^N f(x_n,\theta)\right)\pi_0(\theta) \coloneqq \frac{1}{Z(\bx)}\exp\left(\ovec_N\tr\bof(\bx,\theta)\right)\pi_0(\theta),
\]
where $\bof(\bx,\theta) \coloneqq [f(x_1,\theta),\dots,f(x_N,\theta)]\tr$, $\ovec_N$ is the $N$-dimensional one vector, and $Z(\bx) = \int \exp(\ovec_N\tr\bof(\bx,\theta))\dee\theta$ is a partition function. The posterior $\pi_\bx$ is usually intractable to compute due to $Z(\bx)$, so we employ approximate inference algorithms. Bayesian pseudocoreset methods~\citep{manousakas2020bayesian} aim to construct a synthetic dataset called a \emph{pseudocoreset} $\bu = \{u_m\}_{m=1}^M$ with $M \ll N$ such that the posterior of $\theta$ conditioned on it approximates the original posterior $\pi_\bx$. The pseudocoreset posterior is written as\footnote{\citet{manousakas2020bayesian} considered two sets of parameters, the coreset elements $\bu$ and their weights $\bw = \{w_m\}_{m=1}^M$. We empirically found that the weights $\bw$ have a negligible impact on performance, so we only learn the coreset elements.},
\[
\pi_{\bu}(\theta) = \frac{1}{Z(\bu)}\exp\left(\sum_{m=1}^M f(u_m, \theta)\right)\pi_0(\theta)
= \frac{1}{Z(\bu)} \exp(\ovec_M\tr\bof(\bu,\theta))\pi_0(\theta),
\]
where $\bof(\bu,\theta)$, $\ovec_M$, and $Z(\bu)$ are defined similarly. \citet{manousakas2020bayesian} suggests to learn $\bu$ by minimizing the KL divergence between $\pi_\bu$ and $\pi_\bx$:
\[
\bu^* = \argmin_\bu\,\, \KL[\pi_\bu\Vert \pi_\bx],
\]
and show that the gradient of this objective is computed as
\[
\nabla_{u_m}\KL[\pi_\bu\Vert\pi_\bx] = -\mathrm{Cov}_{\pi_\bu(\theta)}\left[
\nabla_{u} \bof(u_m, \theta), \ovec_N\tr\bof(\bx,\theta) - \ovec_M\tr\bof(\bu,\theta)\right].
\]
The expectation over the pseudocoreset posterior $\pi_\bu$ in the above equation is further approximated with the Monte-Carlo estimation with an approximate posterior $q_\bu \approx \pi_\bu$, for instance, from the Laplace approximation.

\subsection{Dataset Distillation}
With a similar motivation as Bayesian pseudocoresets, dataset distillation methods construct a small synthetic dataset $\bu$, but from the perspective of matching optimal parameters. That is, in dataset distillation, we find $\bu$ such that
\[
\bu^* = \argmin_\bu \,\, d(\theta^*_\bx, \theta^*_\bu),\quad
\theta^*_\bx = \argmin_\theta\,\, \ell(\bx, \theta), \quad
\theta^*_\bu = \argmin_\theta\,\, \ell(\bu,\theta),
\]
with a loss function $\ell(\cdot,\theta)$ and a distance function $d(\cdot,\cdot)$ comparing two parameters. In general, optimizing $d(\theta^*_\bx, \theta^*_\bu)$ is intractable since it is a bilevel optimization problem requiring $\theta_\bx^\star$ and $\theta_\bu^*$, so existing works relax the problem in several ways, such as using kernel-based approaches~\citep{nguyen2020kip,nguyen2020kipinfinite} or solving only for a short horizon in inner optimization~\citep{zhao2020dcgm, zhao2020dcsiamese,cazenavette2022tm}. Below, we briefly review two representative methods---Dataset Condensation (DC)~\citep{zhao2020dcgm} and Matching Training Trajectories (MTT)~\citep{cazenavette2022tm}.

\paragraph{Dataset Condensation}
Instead of fully unrolling the inner optimization process, DC instead computes the matching objective at \emph{every inner step} starting from some random parameter $\theta^{(0)}$.
\[
&\displaystyle\bu^* = \argmin_\bu \,\, \bbE_{\theta^{(0)}}\left[ \sum_{t=0}^T d_\text{cos}\left( \nabla_\theta\ell(\bx, \theta^{(t)}),
\nabla_\theta\ell(\bu, \theta^{(t)})\right)\right],\\
&\theta^{(t)} = \theta^{(t-1)} - \eta \nabla_\theta\ell(\bu, \theta^{(t-1)}) \text{ for } t \geq 1,
\]
where $d_\text{cos}$ denotes the cosine similarity between two gradients. To implement this objective, we build a trajectory of parameters starting from a random $\theta^{(0)}$ with the current distilled dataset $\bu$. We then compute the gradient matching objective for each layer at \emph{each step} of the trajectory to update $\bu$. Despite its appealing simplicity, this algorithm may suffer from short-horizon bias due to the limited trajectory length.

\paragraph{Matching Training Trajectories} MTT extends the single-step gradient matching in DC and considers longer inner optimization trajectories. Starting from a randomly initialized parameter $\theta^{(0)}$, MTT takes multiple inner optimization steps with both $\bu$ and $\bx$, and minimizes the normalized $L_2$ distance between two parameters at the end of the learning trajectories.
\[
&\displaystyle\bu^* = \argmin_\bu \,\, \bbE_{\theta^{(0)}} \left[d_2\left(\theta_\bx^{(L_\bx)}, \theta_\bu^{(L_\bu)}\right)\right],\quad \theta_{\bx}^{(0)} = \theta_\bu^{(0)} = \theta^{(0)},
\]%\\
\[
&\theta_\bx^{(t)} = \theta_\bx^{(t-1)} - \eta_\bx \nabla_\theta \ell(\bx, \theta_\bx^{(t-1)})\text{ for } t=1,\dots,L_\bx,\\
&\theta_\bu^{(t)} = \theta_\bu^{(t-1)} - \eta_\bu \nabla_\theta \ell(\bu, \theta_\bu^{(t-1)})\text{ for } t=1,\dots,L_\bu, 
\]
where $d_2(\cdot,\cdot)$ is the normalized $L_2$ distance between parameters. Unlike DC, this algorithm starts from a common initial parameter $\theta^{(0)}$, keeps track of two separate optimization trajectories (one with $\bu$ and another with $\bx$), and the outer-update for $\bu$ is done only after $L_\bx$ and $L_\bu$ inner steps. To reduce the heavy computational cost for the inner update with the full dataset $\bx$, MTT employs mini-batch SGD for the inner steps, in addition to saving and re-using precomputed SGD trajectories to train $\bu$. The use of precomputed SGD trajectories is similar to that of replay buffers for reinforcement learning \cite{lin1992self,mnih2015human,andrychowicz2017hindsight}, and allows MTT to handle long inner-loop sequences. While MTT significantly improves over DC, presumably due to its longer inner optimization horizon, the algorithm requires large memory for even a moderate number of inner-steps $L_\bu$ because it requires backpropagation through all inner steps with $\bu$.

\section{On Divergence Measures for Bayesian Pseudocoresets}
\label{main:sec:methods}

While the Bayesian pseudocoreset is originally obtained by minimizing the reverse KL divergence, in principle we can minimize an arbitrary divergence measure $D(\cdot,\cdot)$ to achieve the goal:
\[
\bu^* = \argmin_\bu \,\, D(\pi_\bx, \pi_\bu).
\]
In this section, we study three different choices for such divergence measures along with their relation to dataset distillation methods.

\subsection{Reverse KL Divergence}
\citet{manousakas2020bayesian} proposed to minimize reverse KL divergence with a Laplace approximation to obtain the pseudocoreset posterior $\pi_\bu$. Here, we show that it is possible to recast Dataset Condensation (DC) as a reverse KL divergence minimization with an alternative choice for the variational posterior. Let $\theta^{(t)}$ be a parameter in a learning trajectory with a loss function $\ell(\bu,\theta) := -\ovec_M\tr\bof(\bu,\theta)$. Provided that $\theta^{(t)}$ is sufficiently close to a local optimum, we can construct a na\"ive approximation of $\pi_\bu$ as
\[\label{eq:dc_var_dist}
q_\bu(\theta) &= \calN(\theta ; \theta^{(t)}, \Sigma), \quad
\theta^{(t)} = \theta^{(t-1)} - \eta \nabla_\theta\ell(\bu, \theta^{(t-1)}).
\]
Under this setting, we can actually show that one gradient descent step with respect to reverse KL divergence $\KL[\pi_\bu\Vert\pi_\bx]$ is equivalent to one step of gradient matching in DC when the parameter being optimized is close to local optima. We note that while the Gaussian posterior is a simple approximation class, it is commonly used due to its demonstrated usefulness in several literatures~\citep{mandt2017stochastic}, such as the Bernstein-von Mises theorem~\citep{le2012asymptotic,van1998asymptotic}. However, we also note that these Gaussian approximations are not necessary for establishing the connection between the dataset distillation and Bayesian pseudocoresets; Gaussian approximations are just a special case which has the benefit of simplifying the analysis.
\begin{restatable}{prop}{reversekldc}
%\begin{prop}
Let $q_\bu(\theta)$ set as \cref{eq:dc_var_dist}. Assume $\theta^{(t-1)}$ is close enough to local optima, so that we have $\Vert \theta^{(t-1)} - \theta^{(t)}\Vert \ll 1$. Then we have
\[
\nabla_\bu\KL[\pi_\bu\Vert\pi_\bx] \approx -\eta \nabla_\bu\Big( \nabla_\theta\ell(\bx, \theta^{(t-1)}) \tr 
\nabla_\theta\ell(\bu,\theta^{(t-1)})\Big).
\]
\end{restatable}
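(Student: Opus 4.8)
The plan is to replace the intractable pseudocoreset posterior $\pi_\bu$ by its Gaussian surrogate $q_\bu$ from \cref{eq:dc_var_dist}, so that $\KL[\pi_\bu\Vert\pi_\bx]\approx\KL[q_\bu\Vert\pi_\bx]$, and then differentiate the right-hand side in closed form. Expanding $\KL[q_\bu\Vert\pi_\bx] = -H(q_\bu) - \bbE_{q_\bu}[\log\pi_\bx(\theta)]$, the differential entropy $H(q_\bu) = \frac12\log\big((2\pi e)^d\,|\Sigma|\big)$ of a Gaussian depends only on the (fixed) covariance $\Sigma$, not on its mean $\theta^{(t)}$, so it drops out under $\nabla_\bu$; likewise $\log Z(\bx)$ inside $\log\pi_\bx(\theta) = -\ell(\bx,\theta)+\log\pi_0(\theta)-\log Z(\bx)$ is independent of $\bu$, and the prior term $\log\pi_0$ contributes only at lower order (it does not scale with $N$, unlike $\ell(\bx,\cdot)$). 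Hence, up to these terms,
\begin{equation*}
\nabla_\bu\KL[\pi_\bu\Vert\pi_\bx] \approx \nabla_\bu\,\bbE_{q_\bu}\!\big[\ell(\bx,\theta)\big].
\end{equation*}

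Next I would evaluate this gradient by reparameterizing $\theta = \theta^{(t)} + \Sigma^{1/2}\epsilon$ with $\epsilon\sim\calN(0,\bI)$, so that $\bbE_{q_\bu}[\ell(\bx,\theta)] = \bbE_\epsilon[\ell(\bx,\theta^{(t)}+\Sigma^{1/2}\epsilon)]$, and then applying the chain rule through $\theta^{(t)} = \theta^{(t-1)} - \eta\nabla_\theta\ell(\bu,\theta^{(t-1)})$, treating the trajectory point $\theta^{(t-1)}$ as fixed (this matches the DC convention, in which the outer gradient only sees $\bu$ where it appears explicitly at the current inner step). This yields the Jacobian $\nabla_\bu\theta^{(t)} = -\eta\,\nabla_\bu\nabla_\theta\ell(\bu,\theta^{(t-1)})$ and hence
\begin{equation*}
\nabla_\bu\,\bbE_{q_\bu}\!\big[\ell(\bx,\theta)\big] = -\eta\,\bbE_\epsilon\!\left[\big(\nabla_\bu\nabla_\theta\ell(\bu,\theta^{(t-1)})\big)^\top \nabla_\theta\ell\big(\bx,\theta^{(t)}+\Sigma^{1/2}\epsilon\big)\right].
\end{equation*}

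The key approximation is then a first-order Taylor expansion of $\nabla_\theta\ell(\bx,\cdot)$ about $\theta^{(t-1)}$. Since $\|\theta^{(t)}-\theta^{(t-1)}\| = \eta\,\|\nabla_\theta\ell(\bu,\theta^{(t-1)})\|\ll1$ near a local optimum, and $\bbE_\epsilon[\Sigma^{1/2}\epsilon]=0$, the $\epsilon$-linear term averages out and the residual is $O(\|\theta^{(t)}-\theta^{(t-1)}\| + \|\Sigma\|)$; thus $\nabla_\theta\ell(\bx,\theta^{(t)}+\Sigma^{1/2}\epsilon)$ may be replaced by $\nabla_\theta\ell(\bx,\theta^{(t-1)})$ up to higher-order terms. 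Finally, because $\nabla_\theta\ell(\bx,\theta^{(t-1)})$ does not depend on $\bu$, it can be pulled inside the gradient, $\big(\nabla_\bu\nabla_\theta\ell(\bu,\theta^{(t-1)})\big)^\top\nabla_\theta\ell(\bx,\theta^{(t-1)}) = \nabla_\bu\big(\nabla_\theta\ell(\bx,\theta^{(t-1)})^\top\nabla_\theta\ell(\bu,\theta^{(t-1)})\big)$, which is exactly the claimed expression.

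I expect the main obstacle to be making the Taylor step precise: the dropped terms involve the Hessian $\nabla_\theta^2\ell(\bx,\theta^{(t-1)})$ contracted with $\theta^{(t)}-\theta^{(t-1)}$ and with $\Sigma$, and it is the hypothesis $\|\theta^{(t-1)}-\theta^{(t)}\|\ll1$ (together with an implicit smallness of $\Sigma$, or at least the fact that only its trace-scale enters) that controls them. A secondary subtlety is the bookkeeping of what is held fixed under $\nabla_\bu$ — only the explicit $\bu$ in $\ell(\bu,\theta^{(t-1)})$, not the point $\theta^{(t-1)}$ itself — and the dismissal of the prior and log-partition contributions, which I would justify via the independence-of-$\bu$ and scaling-in-$N$ remarks above.
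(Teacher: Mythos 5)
Your argument is correct and reaches the paper's conclusion by essentially the same computation (Gaussian surrogate centered at the SGD iterate, reparameterization, first-order expansion about $\theta^{(t-1)}$, and differentiation through the single gradient step), but you insert the approximation in a slightly different place than the paper does. You substitute $q_\bu$ for $\pi_\bu$ inside the KL itself, so the $\bu$-dependent likelihood and partition terms disappear at once because the Gaussian entropy $H(q_\bu)$ is constant in $\bu$; the paper instead keeps $\KL[\pi_\bu\Vert\pi_\bx]$ exact, writes its gradient as $-\nabla_\bu\log Z(\bu) + \nabla_\bu\bbE_{\pi_\bu}[\ovec_M\tr\bof(\bu,\theta)] - \nabla_\bu\bbE_{\pi_\bu}[\ovec_N\tr\bof(\bx,\theta)]$, approximates each expectation with $q_\bu$ together with a linearization of $\bof(\cdot,\theta)$ about $\theta^{(t-1)}$, and then observes that the first two approximated terms cancel, leaving only the full-data term, which is the analogue of your cross-entropy term. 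The one practical consequence of your ordering is the prior: in the paper's decomposition $\log\pi_0$ cancels exactly between $\log\pi_\bu$ and $\log\pi_\bx$, whereas in yours the residual $-\bbE_{q_\bu}[\log\pi_0(\theta)]$ still depends on $\bu$ through the mean $\theta^{(t)}$ and, after the same expansion, contributes a term of the form $\eta\,\nabla_\bu\big(\nabla_\theta\log\pi_0(\theta^{(t-1)})\tr\nabla_\theta\ell(\bu,\theta^{(t-1)})\big)$ at the same order in $\eta$; your dismissal of it via the $O(1)$-versus-$O(N)$ scaling of prior against data log-likelihood is a reasonable heuristic, but it is an extra assumption the paper's route does not need. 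Your remaining caveats line up with the paper's: the implicit smallness of $\Sigma$ mirrors the paper's assumption $\Vert\eta\nabla_\theta\ell(\bu,\theta^{(t-1)})-\Sigma^{1/2}\varepsilon\Vert\ll1$, and your bookkeeping that $\nabla_\bu$ only sees the explicit $\bu$ at the current inner step (not $\theta^{(t-1)}$) matches the paper's treatment.
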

%\end{prop}
The proof is given in \cref{app:sec:proofs}. In other words, when sufficiently close to local optima, DC can be interpreted as a special instance of Bayesian pseudocoreset via reverse KL minimization with Gaussian approximation for the pseudocoreset posterior $\pi_\bu$. Throughout the paper, we will refer to the Bayesian pseudocoreset with reverse KL minimization as BPC-rKL. To simply implement the Algorithm 1 in \citep{manousakas2020bayesian} for the image dataset, we approximated $\pi_\bu$ as a Gaussian distribution. Please refer to \cref{app:sec:details:rkl} for the detailed algorithm. 

\subsection{Wasserstein Distance}
\label{sec:method:bpc-w}
We can instead choose $D(\cdot,\cdot)$ to be the Wasserstein distance between $\pi_\bu$ and $\pi_\bx$. While the Wasserstein distance is intractable in general, by approximating both $\pi_\bu$ and $\pi_\bx$ with Gaussian distributions, we can attain a closed-form expression for this distance metric. Specifically, let $\theta_\bu$ and $\theta_\bx$ be two parameters sufficiently close to local optima. Then we can construct a crude approximation for $\pi_\bu$ and $\pi_\bx$ as,
\[
\pi_\bu(\theta) \approx q_\bu(\theta):=\calN(\theta ; \theta_\bu, \Sigma_\bu), \quad \pi_\bx(\theta) \approx q_\bx(\theta):= \calN(\theta; \theta_\bx, \Sigma_\bx).
\]
Then the Wasserstein distance between these two distributions is computed as
\[
W_2(q_\bu;q_\bx) = \Vert \theta_\bu - \theta_\bx \Vert_2^2 + \mathrm{Tr}\Big(\Sigma_\bx + \Sigma_\bu - 2(\Sigma_\bu^{1/2}\Sigma_\bx\Sigma_\bu^{1/2})^{1/2}\Big).
\]
Now consider a single outer-step of the MTT. Starting from a common initial $\theta_0$, the algorithm takes the gradient steps both with $\bx$ and $\bu$ to get $\theta_\bx^{(L_\bx)}$ and $\theta_\bu^{(L_\bu)}$. If we set $q_\bu(\theta) = \calN(\theta; \theta_\bu^{(L_\bu)}, \Sigma)$ and $q_\bx(\theta) = \calN(\theta; \theta_\bx^{(L_\bx)}, \Sigma)$ with a common covariance $\Sigma$, then the Wasserstein distance reduces to $\Vert \theta_\bx^{(L_\bx)} - \theta_\bu^{(L_\bu)}\Vert_2^2$, which is proportional to the $L_2$ objective being optimized in the MTT. Similarly to the reverse KL divergence and BPC-rKL, we refer to the Bayesian pseudocoreset with Wasserstein distance minimization as BPC-W. 

\subsection{Forward KL Divergence}
Finally, we consider an alternative Bayesian pseudocoreset algorithm based on minimizing forward KL divergence. The forward KL minimization is known to encourage a model to cover the entire target distribution while the reverse KL minimization favors mode capturing models, indicating that the forward KL may be a better choice for the Bayesian pseudocoreset. Under our setting, the forward KL is computed as
\[
\KL[\pi_\bx\Vert\pi_\bu] = \log Z(\bu) - \log Z(\bx) + \bbE_{\pi_\bx}[\ovec\tr_N\bof(\bx,\theta)] - \bbE_{\pi_\bx}[\ovec\tr_M\bof(\bu,\theta)].
\]
 Then the gradient w.r.t. the pseudocoreset $\bu$ is computed as,
\[
\nabla_\bu\KL[\pi_\bx\Vert\pi_\bu] &= \nabla_\bu\log Z(\bu) - \nabla_\bu\bbE_{\pi_\bx}[\ovec_M\tr\bof(\bu,\theta)]\nonumber\\
&= \bbE_{\pi_\bu}[\nabla_{\bu}(\ovec_M\tr\bof(\bu,\theta))] - \nabla_\bu\bbE_{\pi_\bx}[\ovec_M\tr\bof(\bu,\theta)].
\label{main:eq}
\]
Again, we should approximate the expectation w.r.t. $\pi_\bu$ and $\pi_\bx$ to obtain this gradient. We introduce similar variational posteriors,
\[
q_\bu(\theta) = \calN(\theta; \theta_\bu^{(L_\bu)}, \Sigma_\bu), \quad q_\bx(\theta) = \calN(\theta; \theta_\bx^{(L_\bx)}, \Sigma_\bx).
\]
The endpoint $\theta_\bu^{(L_\bu)}$ is a function of $\bu$, so in principle, the expectation w.r.t. $q_\bu$ involves unrolling through the parameter trajectories $\theta^{(0)} \to \theta^{(1)}_\bu \to \dots \to \theta^{(L_\bu)}_\bu$, as for the BPC-W. We employ truncated backpropagation to reduce memory cost due to this unrolling, i.e., we block the gradient flow through $\theta^{(L_\bu)}$ to obtain
\[
\lefteqn{\nabla_\bu\KL[\pi_\bx\Vert\pi_\bu] \approx 
\bbE_{q_\bu}[\nabla_\bu(\ovec_M\tr\bof(\bu,\theta))] - \nabla_\bu\bbE_{q_\bx}[\ovec_M\tr\bof(\bu,\theta)]}\nonumber\\
&= \bbE_{\varepsilon_\bu}[\nabla_\bu(\ovec_M\tr\bof(\bu, \texttt{sg}(\theta_\bu^{(L_\bu)})+\Sigma^{1/2}_\bu\varepsilon_\bu))] - \nabla_\bu\bbE_{\varepsilon_\bx}[
\ovec_M\tr\bof(\bu,\theta_{\bx}^{(L_\bx)} + \Sigma^{1/2}_\bx\varepsilon_\bx)] \nonumber\\
&= \nabla_\bu\Big( \bbE_{\varepsilon_\bu}[\ovec_M\tr\bof(\bu, \texttt{sg}(\theta_\bu^{(L_\bu)}) + \Sigma_\bu^{1/2}\varepsilon_\bu)] - 
\bbE_{\varepsilon_\bx}[\ovec_M\tr\bof(\bu, \theta_\bx^{(L_\bx)} + \Sigma_\bx^{1/2}\varepsilon_\bx)]\Big),
\]
where $\texttt{sg}(\cdot)$ denotes the stop-grad operation. We further approximate this via Monte-Carlo estimation,
\[\label{eq:fkl_grad}
\approx \nabla_\bu\bigg(
\frac{1}{S} \sum_{s=1}^S \Big(\ovec_M\tr\bof(\bu, \texttt{sg}(\theta_\bu^{(L_\bu)}) + \Sigma_\bu^{1/2}\varepsilon_\bu^{(s)}) - \ovec_M\tr\bof(\bu, \theta_\bx^{(L_\bx)} + \Sigma_\bx^{1/2}\varepsilon_\bx^{(s)}\Big)\bigg),
\]
with $\varepsilon_\bx^{(1)},\dots,\varepsilon_\bx^{(S)},\varepsilon_\bu^{(1)},\dots,\varepsilon_\bu^{(S)} \iidsim \calN(0, I)$. The resulting algorithm starts from a common initial parameter $\theta^{(0)}$, keeps track of two parameter trajectories, and minimizes the difference in the log-likelihood $\ovec_M\tr\bof(\bu,\theta)$ evaluated at the Gaussian neighbors of the endpoints of the two trajectories.
Note that even though we block gradient flow through $\theta_\bu^{(L_\bu)}$, the pseudocoreset $\bu$ still influences each likelihood term $\bof(\bu,\theta)$, so the truncated gradient includes meaningful learning signals. In contrast, if we apply a similar trick to the BPC-W, the matching objective would be $\Vert \texttt{sg}(\theta_\bu^{(L_\bu)}) - \theta_\bx^{(L_\bx)}\Vert$, which is constant w.r.t. $\bu$. We can additionally borrow an idea from MTT, where we utilize the set of expert trajectories to efficiently evaluate the parameter trajectory with $\bx$. We call this overall algorithm BPC-fKL, and provide a summary of the overall procedure in \cref{alg:fkl}.
\begin{algorithm}[t]
\small
\begin{algorithmic}
\caption{Bayesian Pseudocoresets with Forward KL} \label{alg:fkl}
    \Require Set of expert parameter trajectories $\{\tau\}$ trained with $\bx$, each parameter trajectory saves parameters at the end of training epochs.
    \Require Number of updates with the pseudocoreset (full data) $L_\bu (L_\bx)$, total training steps $K$, maximum start epoch $T^+$, number of Gaussian samples $S$, variances $\Sigma_\bu, \Sigma_\bx$, learning rate $\eta$.
    \Require Differentiable augmentation function $\calA$ (Optional). 
    \State Initialize the pseudocoreset $\bu$ by randomly selecting a subset of size $M$ from $\bx$.
    \For{$k=1,\dots,K$}
        \State Sample an expert trajectory $\tau=\{\theta_*^{(r)}\}_{r=0}^T$.
        \State Randomly choose an epoch to start $r \leq T^+$ and initialize $\theta_\bu^{(0)} = \theta_\bx^{(0)} = \theta_*^{(r)}$.
        \State Obtain $\theta_\bx^{(L_\bx)} = \theta_*^{(r+L_\bx)}$.
        \For{$t=1,\dots,L_{\bu}$}
            \State Update the network parameter $\theta_\bu^{(t)}\gets\theta_{\bu}^{(t-1)}+\eta\nabla\ovec_M\tr\bof(\calA(\bu),\theta_\bu^{(t-1)})$.
        \EndFor
        \State Sample random Gaussian noises $\{\varepsilon_\bx^{(s)},\varepsilon_\bu^{(s)}\}_{s=1}^S \iidsim \calN(0, I)$.
        \State Update the pseudocoreset with the gradient using \cref{eq:fkl_grad}.
    \EndFor
\end{algorithmic}
\end{algorithm}

\paragraph{Relation to contrastive divergence}
BPC-fKL is closely related to contrastive divergence~\citep{hinton2002cd, carreira2005contrastive}, a learning algorithm for approximating the maximum likelihood estimator. Suppose we want to train a model $p_\theta(x)=\frac{1}{Z(\theta)}\exp(f_\theta(x))$ by maximizing the log-likelihood function of training examples $\{x_n\}_{n=1}^N\sim p_{\text{data}}(x)$,
\[
L(\theta) = \frac{1}{N}\sum_{n=1}^N \log p_\theta(x_n).
\]
Then the gradient of the log-likelihood is
\[
L'(\theta)&=\frac{1}{N}\sum_{n=1}^N \nabla_\theta f_\theta(x_n) - \bbE_{p_\theta(x)}[\nabla_\theta f_\theta(x)]
=\bbE_{p_{\text{data}}(x)}[\nabla_\theta f_\theta(x)]-\bbE_{p_\theta(x)}[\nabla_\theta f_\theta(x)].
\]
Contrastive divergence approximates this gradient by approximating the second term with sampling methods such as Langevin dynamics or HMC. By doing this we can obtain the MLE for log-likelihood without knowing the exact potential function. Our proposed BPC-fKL performs the same function as the contrastive divergence if the model parameter $\theta$ and training data $\bx$ change to the Bayesian pseudocoresets $\bu$ and the parameters from the true posterior distribution which are approximated by the points near the expert trajectories. In this point of view, BPC-fKL is the maximum likelihood estimator that maximizes the log-likelihood of the parameters from the posterior distribution of the original dataset.

\section{Related Works}
\label{main:sec:related}

\paragraph{Bayesian coresets}
As running algorithms such as MCMC and VI on large datasets is challenging due to expensive computational cost, Bayesian coresets~\citep{Huggins2016coresets}, methods that represent the entire dataset as a sparse and weighted sum of subsets, have been studied.
\citet{trevor2019hilbert} interprets coreset construction as a sparse approximation of vector sums and generalizes it to a sparse regression problem in a Hilbert space.
\citet{trevor2018giga} demonstrates that Hilbert coresets scale the coresets log-likelihood sub-optimally, and presents a modified algorithm. As Hilbert coresets include the choice of weighted $L^2$ inner product or finite-dimensional projection and it is difficult to choose them optimally, another method that minimizes the KL divergence between the coreset posterior and true posterior has also been studied recently~\citep{campbell2019sparsevi}. Bayesian coresets have the advantages of simple and theoretically guaranteed quality of the posterior approximations. However, for high-dimensional data, considering only subsets of the dataset fails as even an optimal coreset has a KL divergence that increases with data dimension. Additionally, privacy concerns make subset-based coresets difficult to apply to real-world conditions. Bayesian pseudocoresets~\citep{manousakas2020bayesian}, constructed from learned synthetic datapoints, can avoid these shortcomings of coresets, but have been empirically validated only in relatively easier low-dimensional settings such as logistic regression. To our best knowledge, this paper is the first to experimentally demonstrate the viability of Bayesian pseudocoresets for high-dimensional real data.

\paragraph{Dataset distillation}
The goal of dataset distillation~\citep{wang2018dataset} is to create a small synthetic dataset that allows the model to have similar test performance to the original dataset even after training with the smaller synthetic dataset. Such synthetic datasets have potential applications in many subfields of machine learning such as continual learning and neural architecture search. However, dataset distillation typically involves a bilevel optimization structure in the learning process, which suffers from high computational costs or training instability. Existing methods alleviate these challenges by matching one-step gradients between synthetic and real data~\citep{zhao2020dcgm,zhao2020dcsiamese}, solving using a closed form solution of kernel ridge regression~\citep{nguyen2020kip, nguyen2020kipinfinite}, or reducing the normalized distance between parameters at the end of synthetic and real training trajectories~\citep{cazenavette2022tm}. However, because these methods focus only on high test accuracy, they do not consider uncertainty or the degree to which the posterior distribution on the synthetic dataset matches the true posterior distribution. While \citet{zhao2021DM} matches the distributions of embedding features for many random networks of synthetic and real data, this work similarly does not consider posterior distributions. This paper re-interprets and extends scalable dataset distillation methods to the problem of matching posterior distribution matching.
\section{Experiments}
\label{main:sec:experiments}

\subsection{Experimental Setup}
\paragraph{Datasets and model architecture} 
We use the CIFAR10 dataset~\cite{krizhevsky2009learning} to generate Bayesian pseudocoresets, and evaluate on the test split of CIFAR10 in addition to the CIFAR10-C dataset ~\cite{hendrycks2019robustness}, which imposes additional uncertainty through image corruptions. Following the experimental setup of previous works~\citep{zhao2020dcgm,zhao2020dcsiamese}, we use a 3-layer ConvNet as the network architecture.

\paragraph{Evaluation}
We obtain pseudocoresets from three Bayesian pseudocoreset construction methods: BPC-fKL, BPC-W, and BPC-rKL. We run two Markov chain Monte Carlo methods (HMC~\citep{neal2010mcmc} and SGHMC~\citep{chen2014sghmc}) and report the top-1 accuracy and negative log-likelihood (NLL) with respect to ground-truth labels.
Note that pseudocoresets are small enough to run full-batch HMC in our experimental setting, and SGHMC which originally subsamples mini-batches from the full data is not a favorable option in our situation. Instead, at each iteration, while holding the batch (pseudocoreset) fixed, we apply random data augmentation to the batch. As a result, the gradients computed from the randomly augmented batches differ at each iteration, and we run SGHMC with these stochastic gradients. To distinguish this setting from the typical mini-batch SGHMC, we denote this as Altered-SGHMC (A-SGHMC). As our results were not sensitive to the choice of hyperparameters, we used a single set of hyperparameters that performed best in initial experiments. Please refer to \cref{app:sec:details} for detailed evaluation settings including hyperparameters.

\begin{table}
\caption{
Performance of each Bayesian pseudocoreset method with $\{1, 10, 20\}$ images per class (ipc) on the CIFAR10 test dataset. 
We present results with HMC and A-SGHMC. 
The SGHMC results for the entire dataset are $0.7383\pm{0.0052}$ accuracy and $0.9387\pm{0.0152}$ nll.
Aug denotes image augmentation during training.
All values are averaged over ten random seeds.
}
\label{tab:cifar10-all}
\centering
\resizebox{\columnwidth}{!}{%
\begin{tabular}{llcccccc} 
\toprule
& \multirow{2}{*}{} & \multicolumn{2}{c}{HMC}                                       & \multicolumn{2}{c}{HMC (+Aug)}                                         & \multicolumn{2}{c}{A-SGHMC (+Aug)}                                      \\ 
\cmidrule(l{4pt}r{4pt}){3-4}
\cmidrule(l{4pt}r{4pt}){5-6}
\cmidrule(l{4pt}r{4pt}){7-8}
                  & & Acc ($\uparrow$)              & NLL ($\downarrow$)            & Acc ($\uparrow$) & NLL ($\downarrow$) & Acc ($\uparrow$)              & NLL ($\downarrow$)             \\ 
\midrule
\multirow{4}{*}{1 ipc}
& Random            & $0.1745\spm{0.0084}$          & $2.4507\spm{0.0657}$          & $0.1745\spm{0.0084}$           &  $2.4507\spm{0.0657}$                               & $0.1414\spm{0.0045}$          & $2.9798\spm{0.0725}$           \\
& BPC-rKL           & $0.2472\spm{0.0120}$          & $2.1635\spm{0.0327}$          & $0.2416\spm{0.0133}$          &  $2.1688\spm{0.0486}$                               & $0.2162\spm{0.0083}$          & $2.4617\spm{0.0841}$           \\
& BPC-W (MTT)            & $\mathbf{0.3435}\spm{0.0207}$ & $\mathbf{1.9311}\spm{0.0329}$ & $0.3338\spm{0.0158}$          &  $1.9589\spm{0.0265}$                               & $\mathbf{0.2934}\spm{0.0121}$ & $\mathbf{2.1400}\spm{0.0333}$  \\
& BPC-fKL           & $0.3354\spm{0.0066}$          & $2.0253\spm{0.0311}$          & $\mathbf{0.3468}\spm{0.0119}$           & $\mathbf{1.9574}\spm{0.0269}$                                & $0.2823\spm{0.0128}$          & $2.1426\spm{0.0343}$           \\
\midrule
\multirow{4}{*}{10 ipc}
& Random            & $0.2807\spm{0.0050}$          & $2.1070\spm{0.0178}$          &$0.2807\spm{0.0050}$  &$2.1070\spm{0.0178}$                      & $0.3085\spm{0.0077}$          & $2.1953\spm{0.0481}$           \\
& BPC-rKL           & $0.3253\spm{0.0075}$          & $1.9627\spm{0.0312}$          &$0.3017\spm{0.0074}$  & $2.0251\spm{0.0134}$                     & $0.3789\spm{0.0154}$          & $1.9492\spm{0.0605}$           \\
& BPC-W (MTT)            & $0.3535\spm{0.0127}$          & $1.9450\spm{0.0258}$          &$0.3646\spm{0.0082}$  & $1.9307\spm{0.0171}$                     & $\mathbf{0.4890}\spm{0.0172}$ & $1.6971\spm{0.0392}$           \\
& BPC-fKL           & $\mathbf{0.4294}\spm{0.0101}$ & $\mathbf{1.7292}\spm{0.0248}$ &$\mathbf{0.4252}\spm{0.0091}$  & $\mathbf{1.7334}\spm{0.0217}$                     & $0.4878\spm{0.0103}$          & $\mathbf{1.6762}\spm{0.0390}$  \\
\midrule
\multirow{4}{*}{20 ipc}
& Random            & $0.3292\spm{0.0057}$ & $2.0164\spm{0.0212}$ & $0.3292\spm{0.0057}$ & $2.0164\spm{0.0212}$ & $0.3832\spm{0.0070}$ & $1.8999\spm{0.0338}$  \\
& BPC-rKL           & $0.3686\spm{0.0115}$ & $1.8848\spm{0.0338}$ & $0.3518\spm{0.0102}$ & $1.9540\spm{0.0350}$ & $0.4307\spm{0.0073}$ & $1.8555\spm{0.0263}$  \\
& BPC-W (MTT)            & $0.4546\spm{0.0146}$ & $1.7336\spm{0.0325}$ & $0.4606\spm{0.0200}$ & $1.7334\spm{0.0506}$ & $\mathbf{0.5691}\spm{0.0167}$ & $\mathbf{1.5439}\spm{0.0408}$  \\
& BPC-fKL           & $\mathbf{0.4910}\spm{0.0088}$ & $\mathbf{1.6279}\spm{0.0264}$ & $\mathbf{0.4833}\spm{0.0069}$ & $\mathbf{1.6440}\spm{0.0231}$ & $0.5153\spm{0.0093}$ & $1.6701\spm{0.0256}$  \\
\bottomrule
\end{tabular}%
}
\end{table}

\begin{figure}
    \begin{minipage}[c]{0.51\linewidth}
    \centering
    \includegraphics[width=\linewidth]{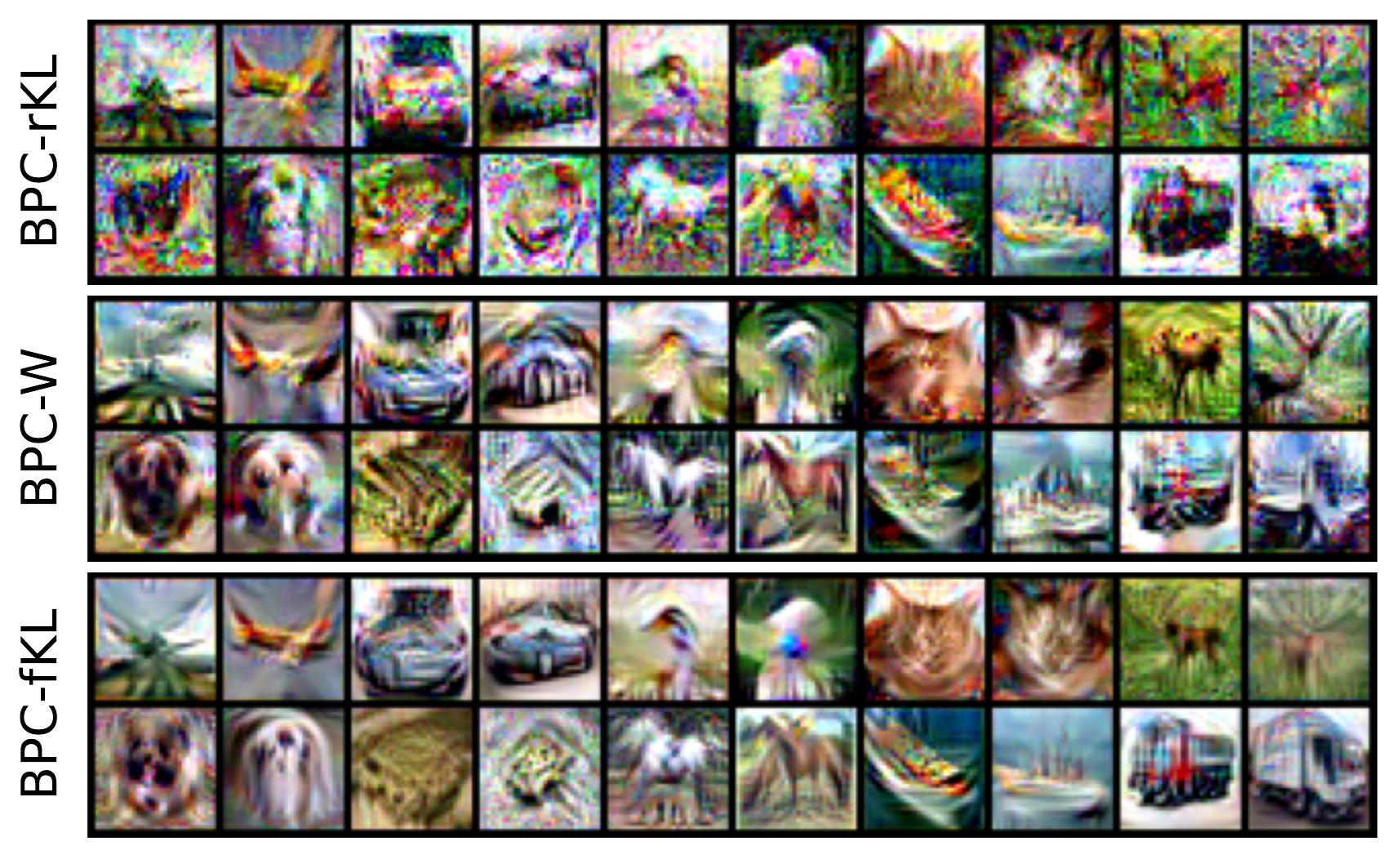}
    % \vspace{10pt}
    \caption{Examples of Bayesian pseudocoresets.}
    \label{fig:CIFAR10-ipc10}
    \end{minipage}
    \hfill
    \begin{minipage}[c]{0.46\linewidth}
    \centering
    \includegraphics[width=0.9\linewidth]{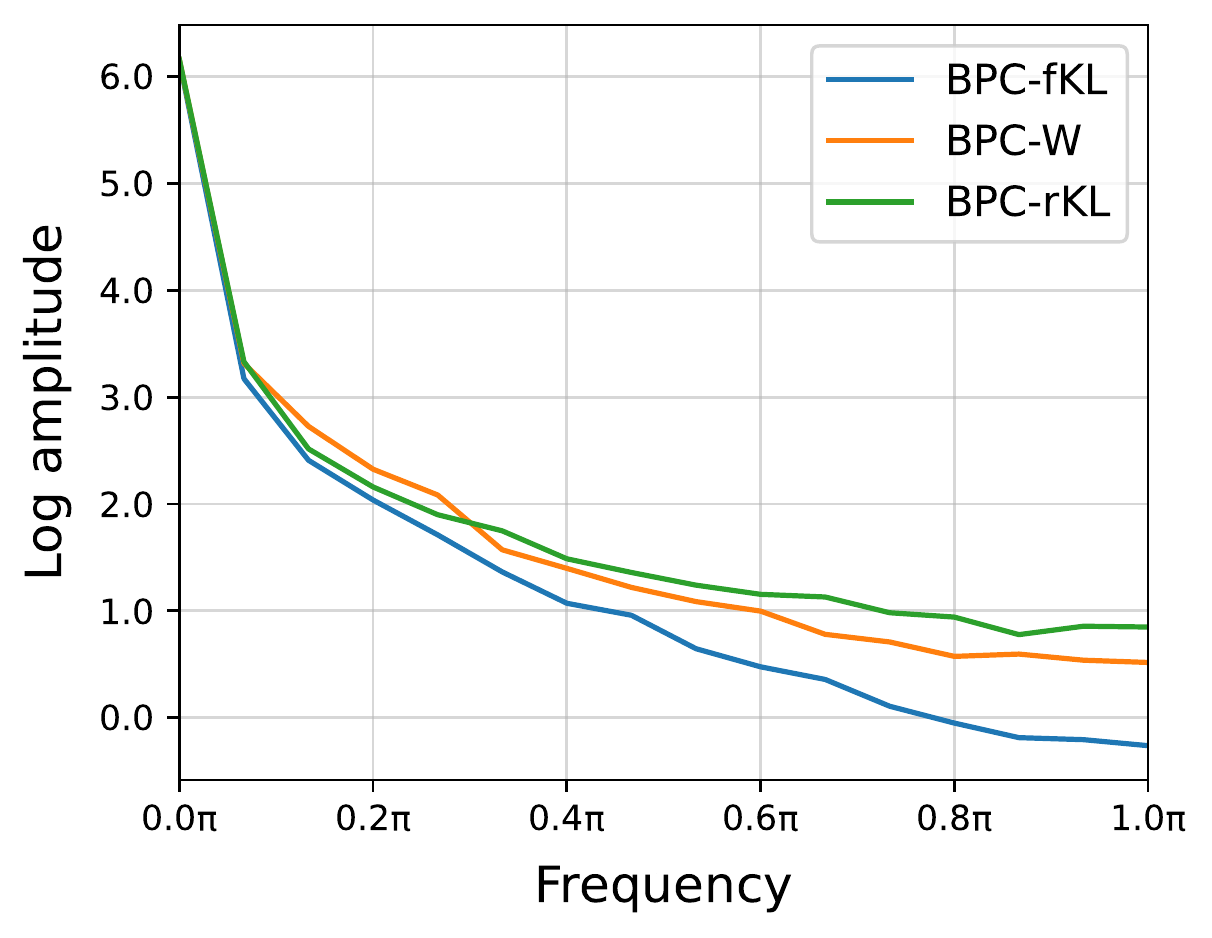}
    % \vspace{10pt}
    \caption{Log amplitude in frequency domain.}
    \label{fig:log-amp}
    \end{minipage}
\end{figure}

\subsection{Main Results}
We experimentally evaluate the effectiveness of the three pseudocoreset construction methods previously discussed. We consider three different settings corresponding to different memory budgets for the pseudocoresets: $n \in \{1, 10, 20\}$ images per class (ipc). In each setting, we consider a random coreset baseline, in which we randomly subsample $n$ images from the original dataset. To further evaluate the effectiveness of data augmentation in pseudocoreset training, we show performance of each coreset method with and without augmentations. We additionally examine the role of augmentations at test time through A-SGHMC.

Results in \cref{tab:cifar10-all} show that all Bayesian pseudocoresets have higher accuracy and lower negative log-likelihood compared to random coresets. BPC-rKL is slightly worse than BPC-W and BPC-fKL, which we interpret as the reverse KL suffering from the property that covers only one major mode, making the BPC-rKL posterior somewhat sub-optimal. In the $1$ ipc (image per class) setting, BPC-W and BPC-fKL have comparable results, and as the pseudocoreset size increases, BPC-fKL is better if there is no augmentation in test time and BPC-W is better if not. It seems that the strength of BPC-W with higher performances with test time augmentations is due to the training method that exactly matches the pseudocoreset training trajectories and expert training trajectories that also be trained with the data augmentations. However, because the learning method of A-SGHMC is not accurate Bayesian learning, and it is known that data augmentation causes a cold posterior effect~\citep{Florian2019howposterior}, the HMC results without augmentations are preferred.

To qualitatively examine the pseudocoresets trained with each divergence measure, we plot the learned pseudocoreset images in \cref{fig:CIFAR10-ipc10}. We see that BPC-fKL looks the most smooth while BPC-rKL is the most noisy. The noisiness of the images learned by BPC-rKL may be due to the nature of the reverse KL divergence, which makes it difficult to escape the mode captured by the initial pseudocoreset images. In contrast, BPC-W and BPC-fKL seem to learn better pseudocoreset as desired to include high-level shapes semantic features representative of each class. As a more quantitative measure of image noisiness, we plot the log amplitude of the diagonal components of the 2D Fourier transform of pseudocoresets in \cref{fig:log-amp}: BPC-rKL has the most high-frequency noise, as reflected in the qualitative examples. This result is consistent with previous studies showing that CNNs are not robust to high-frequency noise~\citep{shao2021adversarial, park2022blurs}. Please refer to \cref{app:images} for all pseudocoreset images.

\subsection{Computational Cost}
\begin{figure}[b]
\vspace{-0.1in}
    \centering
    \begin{subfigure}{0.4\linewidth}
        \centering
        \includegraphics[width=0.99\columnwidth]{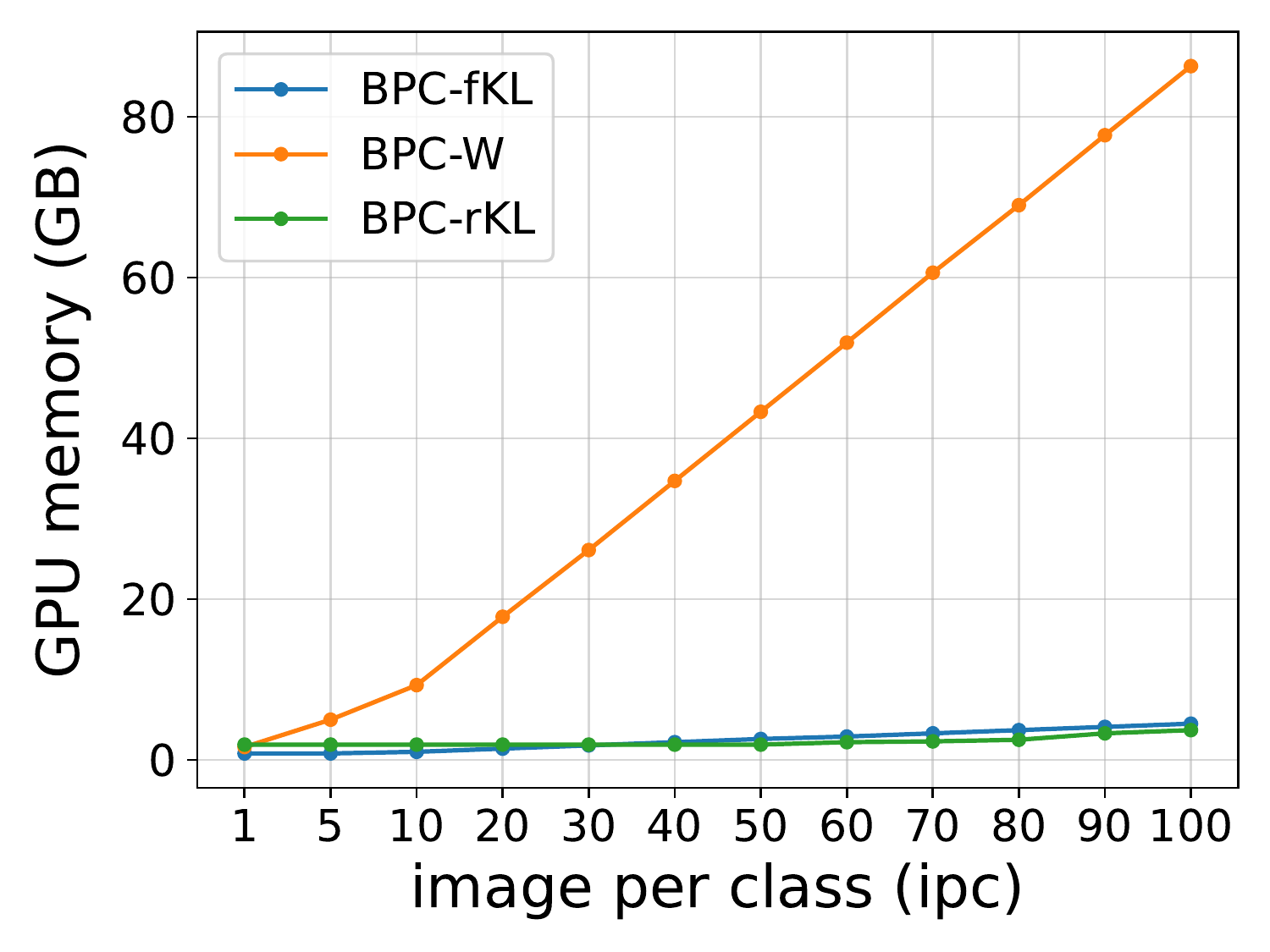}
        \caption{GPU memory usage}
        \label{fig:mem_comp_ipc}
    \end{subfigure}
    \begin{subfigure}{0.4\linewidth}
        \centering
        \includegraphics[width=\columnwidth]{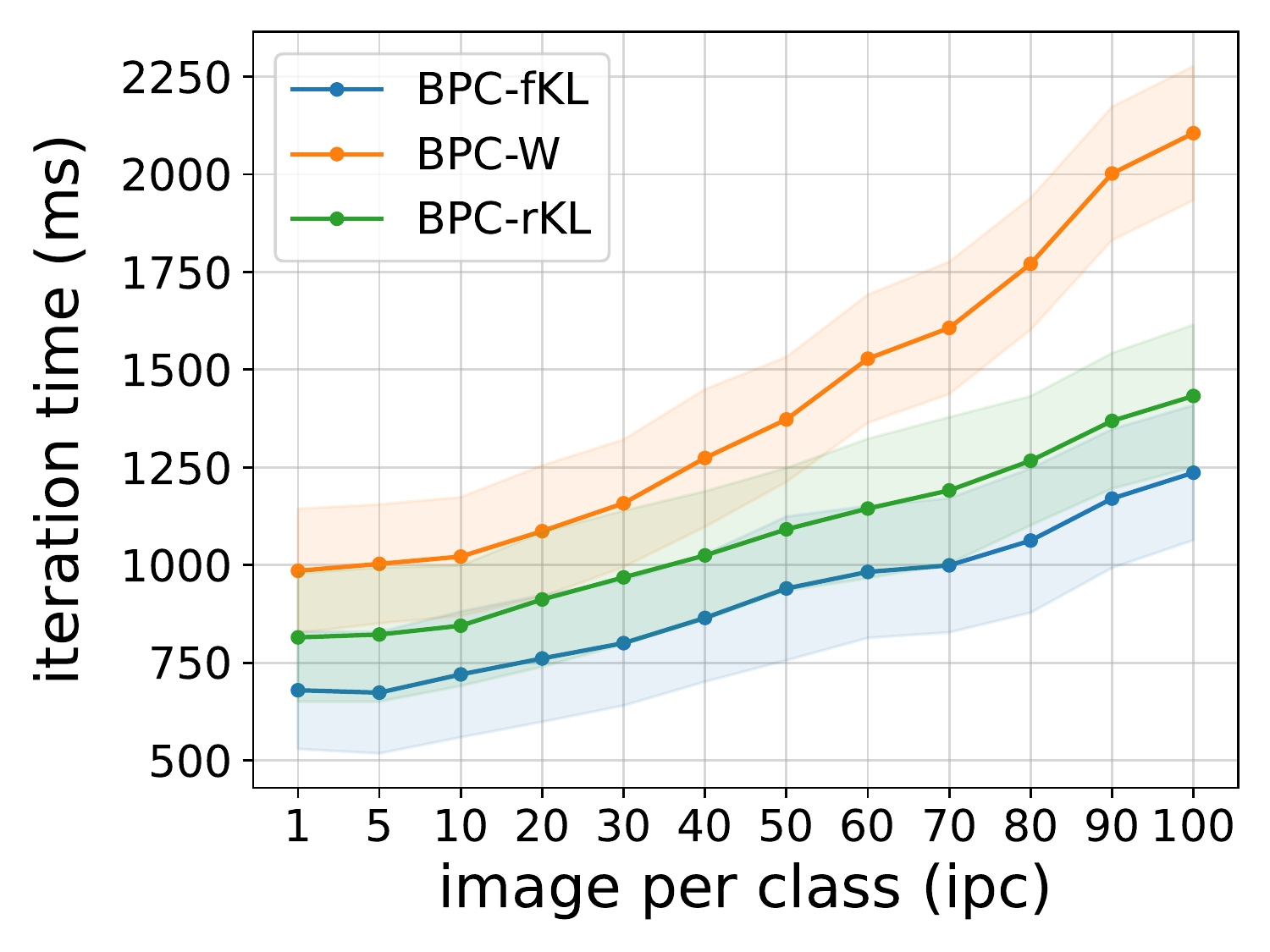}
        \caption{Iteration time}
        \label{fig:time_comp_ipc}
    \end{subfigure}
    \caption{Comparison of computational costs of GPU memory and training time.}
    \label{fig:comparison_cost}
\end{figure}
We show that BPC-fKL, in addition to achieving high performance in terms of accuracy and NLL, requires substantially lower computational costs compared to other pseudocoreset training methods. We measure the computational costs of each method on CIFAR10 with fixed gradient steps $L_\bu=30$. We use 32 cores of Intel Xeon CPU Gold 5120 and 4 Tesla V100s. We note that BPC-W with many pseudocoresets per class cannot be trained on a single GPU due the memory limitations. Therefore, we used a parallel implementation using 4 GPUs in all the methods for a fair comparison. To compare the training time, we measure the time required for a single iteration by averaging across 100 iterations and repeat this process three times to select the largest value among them. \cref{fig:mem_comp_ipc} shows that the memory usage is significantly lower for BPC-fKL compared to BPC-W. This is because, unlike BPC-fKL, BPC-W must hold all gradient flows in memory for trajectory matching. As shown in \cref{fig:time_comp_ipc}, BPC-fKL also has the advantage of faster training time. Pseudocoreset training time increases roughly linearly with pseudocoreset size, but BPC-W's rate of increase is much faster than that of BPC-fKL. Combining the two results and \cref{tab:cifar10-all}, the pseudocoreset training using forward KL divergence is not only comparable in terms of accuracy and NLL but also remarkably efficient in both memory and training time, compared to BPC-W. 
% \jw{Please check this section for readability and errors.} 

\begin{figure}[t]
    \centering
    \includegraphics[width=\linewidth]{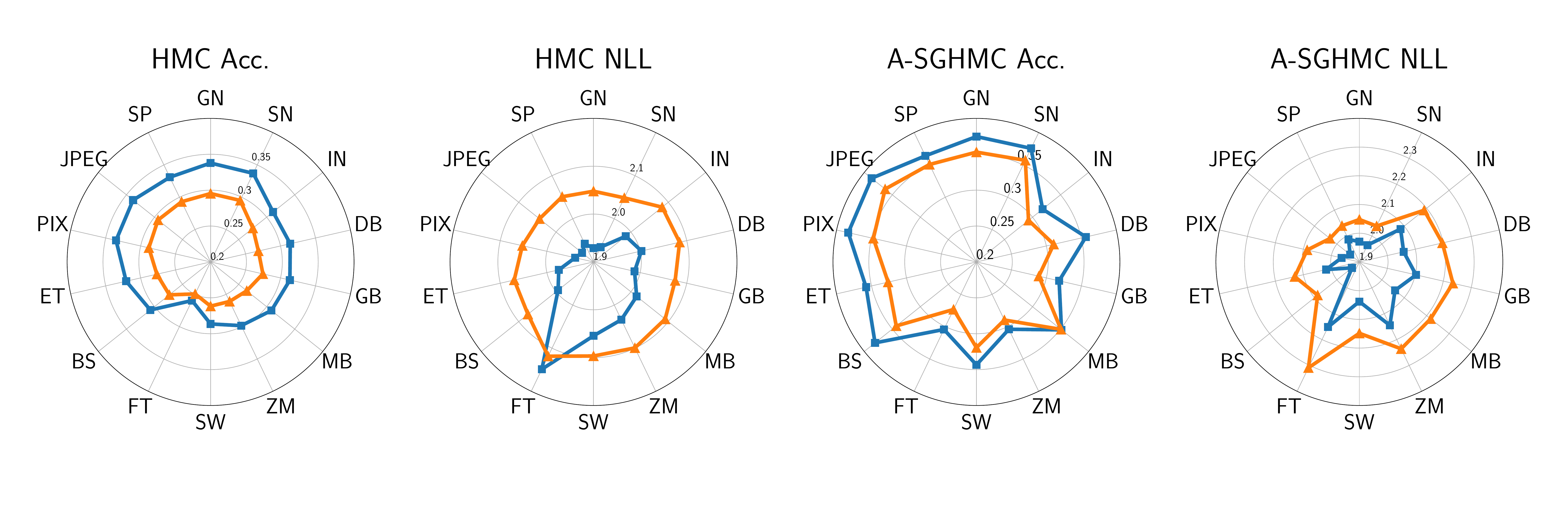}
    \caption{HMC and A-SGHMC results on CIFAR10-C. Each direction represents each type of corruption. Blue and orange represent BPC-fKL and BPC-W, respectively. All values are averaged over 30 runs with different random seeds. Acc $\to$ higher the better, NLL $\to$ lower the better.}
    \label{fig:cifar10-c}
\end{figure} 
\subsection{Robustness to Out-of-Distribution Inputs}
Since one of the merits of the Bayesian approach is Bayesian model averaging which has been shown to improve robustness to distributional shift and calibration, we evaluate the performance of each pseudocoreset method on an out-of-distribution dataset CIFAR10-C~\cite{hendrycks2019robustness}.
\cref{fig:cifar10-c} shows the accuracy and negative log-likelihood of each corrupted dataset for BPC-W and BPC-fKL. Note that we optimize each pseudocoreset  with the clean CIFAR10 dataset and evaluate them on CIFAR10 with 14 corruptions. For A-SGHMC results, we apply differentiable augmentations to both the original dataset and coreset for training, but we do not use it for running HMC. As shown in \cref{fig:cifar10-c}, for both HMC and A-SGHMC, BPC-fKL achieves higher or comparable accuracy and lower negative log-likelihood than the BPC-W for all but one corruption, suggesting that the forward KL divergence is indeed an effective divergence measure for learning Bayesian pseudocoresets.

\section{Conclusion}
\label{main:sec:conclusion}

In this paper, we explored three divergence measures for Bayesian pseudocoresets: reverse KLD, Wasserstein distance, and forward KLD. We showed that existing dataset distillation methods can be linked to the Bayesian pseudocoresets with reverse KLD and Wasserstein distance, and further proposed a novel algorithm for learning Bayesian pseudocoresets by minimizing forward KL divergence. We empirically validated all three methods in terms of their ability to approximate the posterior distribution for real-world image datasets. Bayesian pseudocoresets with both Wasserstein distance and forward KL divergence can approximate the true posterior well and BPC-fKL is more effective in terms of computational cost and robustness to out-of-distribution data.

\paragraph{Limitation}
Despite showing promising results for the first time on Bayesian pseudocoresets for real datasets, there still exists a substantial performance gap between stochastic gradient MCMC on a pseudocoreset and the original dataset. Thus, a promising future direction is to analyze whether such pseudocoresets are useful for stochastic gradient MCMC algorithms when they are used together with mini-batches of the original dataset.

\paragraph{Societal Impacts}
Our work is hardly likely to bring any negative societal impacts. Nevertheless, we should be careful while learning pseudocoresets because any bias present in the original dataset can be transferred to the pseudocoresets. On a positive note, pseudocoresets can alleviate data privacy concerns by eliminating the need for access to the original dataset during downstream task learning.

\begin{ack}
This work was partly supported by KAIST-NAVER Hypercreative AI Center, Korea Foundation for Advanced Studies (KFAS), Institute of Information \& communications Technology Planning \& Evaluation (IITP) grant funded by the Korea government (MSIT) (No.2019-0-00075, Artificial Intelligence Graduate School Program (KAIST), No. 2021-0-02068, Artificial Intelligence Innovation Hub, No.2022-0-00713), and National Research Foundation of Korea (NRF) funded by the Ministry of Education (NRF-2021M3E5D9025030).
\end{ack}

% Use unnumbered first level headings for the acknowledgments. All acknowledgments
% go at the end of the paper before the list of references. Moreover, you are required to declare
% funding (financial activities supporting the submitted work) and competing interests (related financial activities outside the submitted work).
% More information about this disclosure can be found at: \url{https://neurips.cc/Conferences/2022/PaperInformation/FundingDisclosure}.

% Do {\bf not} include this section in the anonymized submission, only in the final paper. You can use the \texttt{ack} environment provided in the style file to autmoatically hide this section in the anonymized submission.

\bibliography{references}

\begin{thebibliography}{36}
\providecommand{\natexlab}[1]{#1}
\providecommand{\url}[1]{\texttt{#1}}
\expandafter\ifx\csname urlstyle\endcsname\relax
  \providecommand{\doi}[1]{doi: #1}\else
  \providecommand{\doi}{doi: \begingroup \urlstyle{rm}\Url}\fi

\bibitem[Andrychowicz et~al.(2017)Andrychowicz, Wolski, Ray, Schneider, Fong,
  Welinder, McGrew, Tobin, Pieter~Abbeel, and
  Zaremba]{andrychowicz2017hindsight}
M.~Andrychowicz, F.~Wolski, A.~Ray, J.~Schneider, R.~Fong, P.~Welinder,
  B.~McGrew, J.~Tobin, O.~Pieter~Abbeel, and W.~Zaremba.
\newblock Hindsight experience replay.
\newblock \emph{Advances in neural information processing systems}, 30, 2017.

\bibitem[Brier et~al.(1950)]{brier1950verification}
G.~W. Brier et~al.
\newblock Verification of forecasts expressed in terms of probability.
\newblock \emph{Monthly weather review}, 78\penalty0 (1):\penalty0 1--3, 1950.

\bibitem[Brown et~al.(2020)Brown, Mann, Ryder, Subbiah, Kaplan, Dhariwal,
  Neelakantan, Shyam, Sastry, Askell, et~al.]{brown2020language}
T.~Brown, B.~Mann, N.~Ryder, M.~Subbiah, J.~D. Kaplan, P.~Dhariwal,
  A.~Neelakantan, P.~Shyam, G.~Sastry, A.~Askell, et~al.
\newblock Language models are few-shot learners.
\newblock \emph{Advances in neural information processing systems},
  33:\penalty0 1877--1901, 2020.

\bibitem[Campbell and Beronov(2019)]{campbell2019sparsevi}
T.~Campbell and B.~Beronov.
\newblock Sparse variational inference: Bayesian coresets from scratch.
\newblock In \emph{Advances in Neural Information Processing Systems 32
  (NeurIPS 2019)}, 2019.

\bibitem[Campbell and Broderick(2018)]{trevor2018giga}
T.~Campbell and T.~Broderick.
\newblock Bayesian coreset construction via greedy iterative geodesic ascent.
\newblock In \emph{Proceedings of The 35th International Conference on Machine
  Learning (ICML 2018)}, 2018.

\bibitem[Campbell and Broderick(2019)]{trevor2019hilbert}
T.~Campbell and T.~Broderick.
\newblock Automated scalable bayesian inference via hilbert coresets.
\newblock \emph{The Journal of Machine Learning Research}, 20\penalty0
  (1):\penalty0 551--588, 2019.

\bibitem[Carreira-Perpinan and Hinton(2005)]{carreira2005contrastive}
M.~A. Carreira-Perpinan and G.~Hinton.
\newblock On contrastive divergence learning.
\newblock In \emph{International workshop on artificial intelligence and
  statistics}, pages 33--40. PMLR, 2005.

\bibitem[Cazenavette et~al.(2022)Cazenavette, Wang, Torralba, Efros, and
  Zhu]{cazenavette2022tm}
G.~Cazenavette, T.~Wang, A.~Torralba, A.~A. Efros, and J.-Y. Zhu.
\newblock Dataset distillation by matching training trajectories.
\newblock In \emph{Proceedings of the IEEE/CVF Conference on Computer Vision
  and Pattern Recognition}, 2022.

\bibitem[Chen et~al.(2014)Chen, Fox, and Guestrin]{chen2014sghmc}
T.~Chen, E.~B. Fox, and C.~Guestrin.
\newblock Stochastic gradient {Hamiltonian Monte Carlo}.
\newblock In \emph{Proceedings of the 31st International Conference on Machine
  Learning (ICML 2014)}, 2014.

\bibitem[Duane et~al.(1987)Duane, Kennedy, Pendleton, and
  Roweth]{duane1987hybrid}
S.~Duane, A.~D. Kennedy, B.~J. Pendleton, and D.~Roweth.
\newblock Hybrid {Monte Carlo}.
\newblock \emph{Physics Letters B}, 195\penalty0 (2):\penalty0 216 -- 222,
  1987.

\bibitem[Dwork et~al.(2014)Dwork, Roth, et~al.]{dwork2014algorithmic}
C.~Dwork, A.~Roth, et~al.
\newblock The algorithmic foundations of differential privacy.
\newblock \emph{Found. Trends Theor. Comput. Sci.}, 9\penalty0 (3-4):\penalty0
  211--407, 2014.

\bibitem[Hendrycks and Dietterich(2019)]{hendrycks2019robustness}
D.~Hendrycks and T.~Dietterich.
\newblock Benchmarking neural network robustness to common corruptions and
  perturbations.
\newblock \emph{Proceedings of the International Conference on Learning
  Representations}, 2019.

\bibitem[Hinton(2002)]{hinton2002cd}
G.~E. Hinton.
\newblock Training products of experts by minimizing contrastive divergence.
\newblock \emph{Neural Computation}, 14\penalty0 (8):\penalty0 1771--1800,
  2002.
\newblock \doi{10.1162/089976602760128018}.

\bibitem[Huggins et~al.(2016)Huggins, Campbell, and
  Broderick]{Huggins2016coresets}
J.~Huggins, T.~Campbell, and T.~Broderick.
\newblock Coresets for bayesian logistic regression.
\newblock In \emph{Advances in Neural Information Processing Systems 29 (NIPS
  2016)}, 2016.

\bibitem[Krizhevsky et~al.(2009)Krizhevsky, Hinton,
  et~al.]{krizhevsky2009learning}
A.~Krizhevsky, G.~Hinton, et~al.
\newblock Learning multiple layers of features from tiny images.
\newblock 2009.

\bibitem[Le~Cam(2012)]{le2012asymptotic}
L.~Le~Cam.
\newblock \emph{Asymptotic methods in statistical decision theory}.
\newblock Springer Science \& Business Media, 2012.

\bibitem[Lin(1992)]{lin1992self}
L.-J. Lin.
\newblock Self-improving reactive agents based on reinforcement learning,
  planning and teaching.
\newblock \emph{Machine learning}, 8\penalty0 (3):\penalty0 293--321, 1992.

\bibitem[Mandt et~al.(2017)Mandt, Hoffman, and Blei]{mandt2017stochastic}
S.~Mandt, M.~D. Hoffman, and D.~M. Blei.
\newblock Stochastic gradient descent as approximate bayesian inference.
\newblock \emph{arXiv preprint arXiv:1704.04289}, 2017.

\bibitem[Manousakas et~al.(2020)Manousakas, Xu, Mascolo, and
  Campbell]{manousakas2020bayesian}
D.~Manousakas, Z.~Xu, C.~Mascolo, and T.~Campbell.
\newblock Bayesian pseudocoresets.
\newblock In \emph{Advances in Neural Information Processing Systems 33
  (NeurIPS 2020)}, 2020.

\bibitem[Mnih et~al.(2015)Mnih, Kavukcuoglu, Silver, Rusu, Veness, Bellemare,
  Graves, Riedmiller, Fidjeland, Ostrovski, et~al.]{mnih2015human}
V.~Mnih, K.~Kavukcuoglu, D.~Silver, A.~A. Rusu, J.~Veness, M.~G. Bellemare,
  A.~Graves, M.~Riedmiller, A.~K. Fidjeland, G.~Ostrovski, et~al.
\newblock Human-level control through deep reinforcement learning.
\newblock \emph{nature}, 518\penalty0 (7540):\penalty0 529--533, 2015.

\bibitem[Naeini et~al.(2015)Naeini, Cooper, and Hauskrecht]{naeini2015ece}
M.~P. Naeini, G.~Cooper, and M.~Hauskrecht.
\newblock Obtaining well calibrated probabilities using bayesian binning.
\newblock In \emph{Twenty-Ninth AAAI Conference on Artificial Intelligence},
  2015.

\bibitem[Neal(2010)]{neal2010mcmc}
R.~M. Neal.
\newblock {MCMC} using {Hamiltonian} dynamics.
\newblock \emph{Handbook of Markov Chain Monte Carlo}, 54:\penalty0 113--162,
  2010.

\bibitem[Nguyen et~al.(2020)Nguyen, Chen, and Lee]{nguyen2020kip}
T.~Nguyen, Z.~Chen, and J.~Lee.
\newblock Dataset meta-learning from kernel ridge-regression.
\newblock In \emph{International Conference on Learning Representations
  (ICLR)}, 2020.

\bibitem[Nguyen et~al.(2021)Nguyen, Novak, Xiao, and
  Lee]{nguyen2020kipinfinite}
T.~Nguyen, R.~Novak, L.~Xiao, and J.~Lee.
\newblock Dataset distillation with infinitely wide convolutional networks.
\newblock In \emph{Advances in Neural Information Processing Systems 34
  (NeurIPS 2021)}, 2021.

\bibitem[Park and Kim(2022)]{park2022blurs}
N.~Park and S.~Kim.
\newblock Blurs behave like ensembles: Spatial smoothings to improve accuracy,
  uncertainty, and robustness.
\newblock In \emph{International Conference on Machine Learning}, pages
  17390--17419. PMLR, 2022.

\bibitem[Shao et~al.(2021)Shao, Shi, Yi, Chen, and Hsieh]{shao2021adversarial}
R.~Shao, Z.~Shi, J.~Yi, P.-Y. Chen, and C.-J. Hsieh.
\newblock On the adversarial robustness of vision transformers.
\newblock \emph{arXiv preprint arXiv:2103.15670}, 2021.

\bibitem[Van~der Vaart(1998)]{van1998asymptotic}
A.~Van~der Vaart.
\newblock Asymptotic statistics. cambridge.
\newblock \emph{UK: Cam}, 1998.

\bibitem[Wang et~al.(2018)Wang, Zhu, Torralba, and Efros]{wang2018dataset}
T.~Wang, J.-Y. Zhu, A.~Torralba, and A.~A. Efros.
\newblock Dataset distillation.
\newblock \emph{arXiv preprint arXiv:1811.10959}, 2018.

\bibitem[Welling(2009)]{welling2009herding}
M.~Welling.
\newblock Herding dynamical weights to learn.
\newblock In \emph{Proceedings of the 26st International Conference on Machine
  Learning (ICML 2009)}, 2009.

\bibitem[Wenzel et~al.(2020)Wenzel, Roth, Veeling, Świątkowski, Tran, Mandt,
  Snoek, Salimans, Jenatton, and Nowozin]{Florian2019howposterior}
F.~Wenzel, K.~Roth, B.~S. Veeling, J.~Świątkowski, L.~Tran, S.~Mandt,
  J.~Snoek, T.~Salimans, R.~Jenatton, and S.~Nowozin.
\newblock How good is the bayes posterior in deep neural networks really?
\newblock In \emph{Proceedings of The 37th International Conference on Machine
  Learning (ICML 2020)}, 2020.

\bibitem[Wolf(2011)]{Wolf2011kcenter}
G.~W. Wolf.
\newblock Facility location: concepts, models, algorithms and case studies.
\newblock 2011.

\bibitem[Zhai et~al.(2021)Zhai, Kolesnikov, Houlsby, and Beyer]{Zhai2021vit}
X.~Zhai, A.~Kolesnikov, N.~Houlsby, and L.~Beyer.
\newblock Scaling vision transformers.
\newblock \emph{arXiv preprint arXiv:2106.04560}, 2021.

\bibitem[Zhang et~al.(2022)Zhang, Roller, Goyal, Artetxe, Chen, Chen, Dewan,
  Diab, Li, Lin, et~al.]{zhang2022opt}
S.~Zhang, S.~Roller, N.~Goyal, M.~Artetxe, M.~Chen, S.~Chen, C.~Dewan, M.~Diab,
  X.~Li, X.~V. Lin, et~al.
\newblock Opt: Open pre-trained transformer language models.
\newblock \emph{arXiv preprint arXiv:2205.01068}, 2022.

\bibitem[Zhao and Bilen(2021{\natexlab{a}})]{zhao2020dcsiamese}
B.~Zhao and H.~Bilen.
\newblock Dataset condensation with differentiable siamese augmentation.
\newblock In \emph{Proceedings of The 38th International Conference on Machine
  Learning (ICML 2021)}, 2021{\natexlab{a}}.

\bibitem[Zhao and Bilen(2021{\natexlab{b}})]{zhao2021DM}
B.~Zhao and H.~Bilen.
\newblock Dataset condensation with distribution matching.
\newblock \emph{arXiv preprint arXiv:2110.04181}, 2021{\natexlab{b}}.

\bibitem[Zhao et~al.(2021)Zhao, Mopuri, and Bilen]{zhao2020dcgm}
B.~Zhao, K.~R. Mopuri, and H.~Bilen.
\newblock Dataset condensation with gradient matching.
\newblock In \emph{International Conference on Learning Representations}, 2021.

\end{thebibliography}

\clearpage
\newpage
\appendix
\section{Proofs}
\label{app:sec:proofs}
\reversekldc*
\begin{proof}
For notational simplicity, let $\theta_0 = \theta^{(t-1)}$. 
We can reparameterize $\theta \sim q_\bu$ as 
\[
\theta = \theta_0 - \eta \nabla_\theta\ell(\bu, \theta_0) + \Sigma^{1/2}\varepsilon, \quad \varepsilon \sim \calN(0, I),
\]
Assume that $\eta$ and $\Sigma$ are chosen such that $\Vert \eta\nabla_\theta\ell(\bu,\theta_0) - \Sigma^{1/2}\varepsilon\Vert \ll 1$. Then we have
\[
\bbE_{\pi_\bu}[\ovec_M\tr\bof(\bu,\theta)] &\approx \bbE_{\varepsilon}[ \ovec_M\tr\bof(\bu, \theta_0 - \eta\nabla_\theta\ell(\bu,\theta_0) + \Sigma^{1/2}\varepsilon)\Big]\nonumber\\
&\approx \bbE_\varepsilon\Big[\ovec_M\tr\Big(\bof(\bu,\theta_0) + \nabla_\theta\bof(\bu,\theta_0)(-\eta\nabla_\theta\ell(\bu,\theta_0) + \Sigma^{1/2}\varepsilon)\Big)\Big] \nonumber\\
&= \ovec_M\tr\Big(\bof(\bu,\theta_0) - \eta\nabla_\theta \bof(\bu,\theta_0) \nabla_\theta\ell(\bu,\theta_0)\Big).
\]
Similarly,
\[
\bbE_{\pi_\bu}[\ovec_N\tr\bof(\bx,\theta)] &\approx \ovec_N\tr\Big(\bof(\bx,\theta_0) - \eta\nabla_\theta\bof(\bx,\theta_0) \nabla_\theta\ell(\bu,\theta_0)\Big).
\]
Note also that
\[
\nabla_{\bu}\log Z(\bu) &= \nabla_\bu\log \int \exp(\ovec_M\tr\bof(\bu,\theta))\pi_0(\dee\theta)\nonumber\\ 
&= \bbE_{\pi_\bu}[ \nabla_\bu(\ovec_M\tr\bof(\bu,\theta)) ] \nonumber\\
& \approx   \bbE_{q_\bu} [ \nabla_\bu(\ovec_M\tr\bof(\bu,\theta))] \nonumber\\
&= \bbE_\varepsilon\Big[ \nabla_\bu\Big(\ovec_M\tr\bof(\bu, \theta_0 - \eta\nabla_\theta\ell(\bu,\theta_0)+\Sigma^{1/2}\varepsilon)\Big) \Big]\nonumber\\
&\approx \bbE_\varepsilon\Big[\nabla_\bu \Big(
\ovec_M\tr\Big(\bof(\bu,\theta_0) + \nabla_\theta\bof(\bu,\theta_0)(-\eta\nabla_\theta\ell(\bu,\theta_0) + \Sigma^{1/2}\varepsilon)\Big)\Big]\nonumber\\
&= \nabla_\bu\Big(\ovec_M\tr\Big( \bof(\bu,\theta_0) - \eta \nabla_\theta\bof(\bu,\theta_0)\nabla_\theta\ell(\bu,\theta_0)\Big)\Big).
\]
Plugging this into the KL gradient, we get
\[
\nabla_\bu\KL[\pi_\bu\Vert\pi_\bx] &= -\nabla_\bu\log Z(\bu) + \nabla_\bu \bbE_{\pi_\bu}[\ovec_M\tr\bof(\bu,\theta)] - \nabla_\bu\bbE_{\pi_\bu}[\ovec_N\tr\bof(\bx,\theta)]\nonumber\\
&\approx \eta\nabla_\bu\Big(\ovec_N\tr\nabla_\theta\bof(\bx,\theta_0) \nabla_\theta\ell(\bu,\theta_0)\Big)
 \nonumber\\
&= -\eta\nabla_\bu\Big( \nabla_\theta\ell(\bx,\theta_0) \tr \nabla_\theta\ell(\bu,\theta_0) \Big).
\]

\end{proof}
\section{Experimental Details}
\label{app:sec:details}

Code is available at \href{https://github.com/balhaekim/BPC-Divergences}{https://github.com/balhaekim/BPC-Divergences}.

\subsection{Hyperparameter settings}
\paragraph{Training}
\begin{table}
\caption{Hyperparameters used for our best-performing experiments.\\}
\label{tab:cifar10-hps}
\centering
\small
\begin{tabular}{cl|ccccccccc} 
\toprule
                        & \multicolumn{1}{c|}{} & $K$ & $T^+$ & $L_\bu$ & $L_\bx$ & $\eta$ & $S$ & $\Sigma_\bu^{1/2}$ & $\Sigma_\bx^{1/2}$ & $B$  \\ 
\midrule
\multirow{3}{*}{1 ipc}  & BPC-rKL               & 5000  & 2          & 50                        & -                         & 0.01                    & 10    & 0.01                                                     & -                                                        & 1000   \\
                        & BPC-W                 & 5000  & 2          & 50                        & 2                         & -                       & -     & -                                                        & -                                                        & -      \\
                        & BPC-fKL               & 5000  & 2          & 50                        & 1                         & 0.01                    & 30    & 0.01                                                     & 0.01                                                     & -      \\ 
\midrule
\multirow{3}{*}{10 ipc} & BPC-rKL               & 5000  & 20         & 30                        & -                         & 0.03                    & 10    & 0.01                                                     & -                                                        & 1000   \\
                        & BPC-W                 & 5000  & 20         & 30                        & 2                         & -                       & -     & -                                                        & -                                                        & -      \\
                        & BPC-fKL               & 5000  & 20         & 30                        & 1                         & 0.03                    & 30    & 0.01                                                     & 0.01                                                     & -      \\ 
\midrule
\multirow{3}{*}{20 ipc} & BPC-rKL               & 5000  & 30         & 30                        & -                         & 0.03                    & 10    & 0.01                                                     & -                                                        & 1000   \\
                        & BPC-W                 & 5000  & 30         & 30                        & 2                         & -                       & -     & -                                                        & -                                                        & -      \\
                        & BPC-fKL               & 5000  & 30         & 30                        & 1                         & 0.03                    & 30    & 0.01                                                     & 0.01                                                     & -      \\
\bottomrule
\end{tabular}
\end{table}
In \cref{tab:cifar10-hps}, we enumerate the hyperparameters used for our results in \cref{main:sec:experiments}. Since we use expert trajectories for all methods to train the Bayesian pseudocoresets, we refer to hyperparameters related to expert trajectories, such as the number of SGD steps or the maximum random starting points, described in~\citep{cazenavette2022tm}. We found that a slightly shorter expert training step is better for BPC-fKL, so we used an expert step 1 epoch shorter than BPC-W. Another important hyperparameter for BPC-fKL is the inner SGD learning rate $\eta$. For each setting, we used the best learning rate from a hyperparameter sweep over $\{0.01, 0.02, 0.03, 0.04\}$. All other hyperparameters are same for all methods.

\paragraph{Evaluation}
The evaluation methods we used are summarized in \cref{alg:hmc} and \cref{alg:a-sghmc}. We sampled the momentum from a normal distribution with scale $\sigma_r$ only for initialization. During leapfrog steps, we simulated the Hamiltonian dynamics as if it came from a standard Gaussian. As mentioned in the main text, the tendency did not significantly change depending on sampling hyperparameters. Since our focus is on providing a fair comparison between each Bayesian pseudocoreset method rather than raw performance, we used a single set of hyperparameters to generate all results. We summarize the hyperparameters used for our evaluations in \cref{tab:sampling-hps}.
\begin{algorithm}[t]
\small
\begin{algorithmic}
\caption{Hamiltonian Monte-Carlo Sampling (HMC)} \label{alg:hmc}
    \Require Number of iteration $N$, initial sample distribution scale $\sigma_\theta$, initial momentum distribution scale $\sigma_r$, number of leapfrog step $m$, step size $\varepsilon$,
    \Require Potential energy function $U(\bu, \theta)=-\ovec_M\tr\bof(\bu, \theta)+\lambda \|\theta\|_2^2$ with a dataset $\bu$ and the weight decay factor $\lambda$.
    \State Initialize $\theta^{(1)}\sim\normal(0, \sigma_\theta^2)$.
    \For{$t=1,\dots,N$}
        \State Resample momentum $r^{(t)}\sim\normal(0,\sigma_r^2)$.
        \State Set $(\theta_0, r_0)=(\theta^{(t)}, r^{(t)})$, $\theta^{(t+1)}=\theta^{(t)}$.
        \State $r_0 \gets r_0 - \frac{\varepsilon}{2}\nabla U(\bu, \theta_0)$
        \For{$i=1,\dots,m$}
            \State $\theta_i \gets \theta_{i-1}+\varepsilon r_{i-1}$
            \State $r_i \gets r_{i-1} - \varepsilon \nabla U(\bu, \theta_i)$
        \EndFor
        \State $r_m \gets r_{m-1} - \frac{\varepsilon}{2} \nabla U(\bu, \theta_m)$ 
        \State $(\hat{\theta}, \hat{r}) = (\theta_m, r_m)$
        \State Metropolis-Hastings correction:
        \State $u\sim$Uniform$(0,1)$
        \State $\rho = e^{H(\hat{\theta}, \hat{r})-H(\theta^{(t)}, r^{(t)})}$
        \If {$u< \min(1, \rho)$}
            \State $\theta^{(t+1)}=\hat{\theta}$
        \EndIf
    \EndFor
\end{algorithmic}
\end{algorithm}

\begin{algorithm}[t]
\small
\begin{algorithmic}
\caption{Altered Stochastic Gradient Hamiltonian Monte-Carlo Sampling (A-SGHMC)} \label{alg:a-sghmc}
    \Require Number of iteration $N$, initial sample distribution scale $\sigma_\theta$, initial momentum distribution scale $\sigma_r$, number of leapfrog step $m$, step size $\varepsilon$, momentum decay factor $\alpha$, noise scale $T$.
    \Require Potential energy function $U(\bu, \theta)=-\ovec_M\tr\bof(\bu, \theta)+\lambda \|\theta\|_2^2$ with a dataset $\bu$ and the weight decay factor $\lambda$.
    \Require Differentiable augmentation function $\calA$ used during the pseudocoreset training.
    \State Initialize $\theta^{(1)}\sim\normal(0, \sigma_\theta^2)$.
    \State Initialize momentum $r^{(1)}\sim\normal(0,\sigma_r^2)$.
    \For{$t=1,\dots,N$}
        \State $(\theta_0, r_0)=(\theta^{(t)}, r^{(t)})$.
        \For{$i=1,\dots,m$}
            \State $\theta_i \gets \theta_{i-1}+\varepsilon r_{i-1}$
            \State $r_i \gets (1-\alpha)r_{i-1} - \varepsilon \nabla U(\calA(\bu), \theta_i)+\normal(0,2\alpha T)$
        \EndFor
        \State $(\theta^{(t+1)}, r^{(t+1)})=(\theta_m, r_m)$
    \EndFor
\end{algorithmic}
\end{algorithm}
\begin{table}
\centering
\small
\caption{Hyperparameters used for evaluations.}
\label{tab:sampling-hps}
\begin{tabular}{cl|ccccccccc} 
\toprule
                        & \multicolumn{1}{c|}{} & $N$   & $m$  & burn & $\sigma_\theta$ & $\sigma_r$ & $\varepsilon$  & $\lambda$ & $\alpha$ & $T$     \\ 
\midrule
\multirow{2}{*}{ipc 1}  & HMC                   & 20  & 20 & 10   & 0.1     & 0.01  & 0.05 & 0.5  & -   & -     \\
                        & A-SGHMC               & 20  & 5  & 10   & 0.1     & 0.1   & 0.03 & 1.0  & 0.1 & 0.01  \\ 
\midrule
\multirow{2}{*}{ipc 10} & HMC                   & 100 & 5  & 50   & 0.1     & 0.1   & 0.01 & 1.5  & -   & -     \\
                        & A-SGHMC               & 100 & 5  & 50   & 0.1     & 0.1   & 0.01 & 1.5  & 0.1 & 0.01  \\ 
\midrule
\multirow{2}{*}{ipc 20} & HMC                   & 100 & 5  & 50   & 0.1     & 0.1   & 0.01 & 1.5  & -   & -     \\
                        & A-SGHMC               & 100 & 5  & 50   & 0.1     & 0.1   & 0.01 & 1.0  & 0.1 & 0.01  \\
\bottomrule
\end{tabular}
\end{table}

\subsection{Implementation details for BPC-rKL}
\label{app:sec:details:rkl}
To obtain a Bayesian pseudocoreset with reverse KL divergence by Algorithm 1 in \citep{manousakas2020bayesian}, we need to sample from an approximated pseudocoreset posterior at each step through MCMC methods such as Langevin dynamics or HMC. To simply implement this, we also approximate the pseudocoreset posterior by a Gaussian distribution with the mean of the end point of SGD training trajectories like BPC-W or BPC-fKL. In initial experiments, we tried using SGD training trajectories starting from a random initial point or a point on the expert trajectory, but we found that using the expert trajectory achieves better performance. We provide a detailed description of the BPC-rKL algorithm in \cref{alg:rkl}.
\begin{algorithm}[t]
\small
\begin{algorithmic}
\caption{Bayesian Pseudocoresets with Reverse KL} \label{alg:rkl}
    \Require Set of expert parameter trajectories $\{\tau\}$ trained with $\bx$, each parameter trajectory saves parameters at the end of training epochs.
    \Require Number of updates with the pseudocoreset $L_\bu$, total training steps $K$, maximum start epoch $T^+$, the number of Gaussian samples $S$, variance $\Sigma_\bu$, inner SGD learning rate $\eta$, minibatch size $B$, pseudocoresets learning rate $\gamma$.
    \Require Differentiable augmentation function $\calA$ (Optional). 
    \State Initialize the pseudocoreset $\bu$ by randomly selecting a subset of size $M$ from $\bx$.
    \For{$k=1,\dots,K$}
        \State Sample an expert trajectory $\tau=\{\theta_*^{(r)}\}_{r=0}^T$.
        \State Randomly choose an epoch to start $r \leq T^+$ and initialize $\theta_\bu^{(0)} = \theta_*^{(r)}$.
        \For{$t=1,\dots,L_{\bu}$}
            \State Update the network parameter $\theta_\bu^{(t)}\gets\theta_{\bu}^{(t-1)}+\eta\nabla\ovec_M\tr\bof(\calA(\bu),\theta_\bu^{(t-1)})$.
        \EndFor
        \State Sample random Gaussian noises $\{\varepsilon_\bu^{(s)}\}_{s=1}^S \iidsim \calN(0, I)$.
        \State Obtain a minibatch of $B$ datapoints from the original dataset $\{x_1,\dots,x_B\}\subset \bx$.
        \For{$s=1,\dots,S$}
            \State $g_s\gets \Big(\bof(\calA(x_b), \theta_\bu^{(L_\bu)}+\Sigma_\bu^{1/2}\varepsilon_\bu^{(s)}) -\frac{1}{S}\sum_{s'=1}^S \bof(\calA(x_b), \theta_\bu^{(L_\bu)}+\Sigma_\bu^{1/2}\varepsilon_\bu^{(s')}) \Big)_{b=1}^B \in \Real^B$ 
            \State $\tilde{g}_s\gets \Big(\bof(\calA(u_m), \theta_\bu^{(L_\bu)}+\Sigma_\bu^{1/2}\varepsilon_\bu^{(s)}) -\frac{1}{S}\sum_{s'=1}^S \bof(\calA(u_m), \theta_\bu^{(L_\bu)}+\Sigma_\bu^{1/2}\varepsilon_\bu^{(s')}) \Big)_{m=1}^M \in \Real^M$ 
            \For{$m=1,\dots,M$}
                \State $\tilde{h}_{m,s}\gets \nabla_u \bof(\calA(u_m), \theta_\bu^{(L_\bu)}+\Sigma_\bu^{1/2}\varepsilon_\bu^{(s)}) - \frac{1}{S}\sum_{s'=1}^S \nabla_u\bof(\calA(u_m), \theta_\bu^{(L_\bu)}+\Sigma_\bu^{1/2}\varepsilon_\bu^{(s')})$.
            \EndFor
        \EndFor
        \For{$m=1,\dots,M$}
            \State  $\hat{\nabla}_{u_m}\gets -\frac{1}{S}\sum_{s=1}^S \tilde{h}_{m,s}(\frac{1}{B}\ovec_B\tr g_s-\frac{1}{M}\ovec_M\tr \tilde{g}_s)$.
        \EndFor
        \For{$m=1,\dots,M$}
            \State $u_m\gets u_m-\gamma\hat{\nabla}_{u_m}$.
        \EndFor
    \EndFor
\end{algorithmic}
\end{algorithm}

\section{Additional Experiments}
\label{app:sec:additional}

\subsection{Extending BPC-W to Gaussians with diagonal covariances}
In \cref{sec:method:bpc-w}, we approximated the pseudocoreset posterior and the original posterior to Gaussian distributions with the same covariances to obtain BPC-W. As an extension, we tried approximating the two distributions using Gaussian distributions with diagonal covariances. The A-SGHMC results for these pseudocoresets are in \cref{tab:tm-advance}. We found that the results are comparable, and the additional expressivity of a diagonal covariance did not further increase performance. It seems to be because both posteriors would be much more complicated to approximate with Gaussians with diagonal covariances. While using a more complex distribution family might improve performance, we use Gaussian distributions with the same covariance in all dimensions to obtain BPC-W results throughout this paper.
% It may be more improved if we approximate them with more complex distributions, however, in this paper, we just use Gaussians with the same covariances to obtain BPC-W results.
\begin{table}
\centering
\small
\caption{BPC-W vs BPC-W with diagonal covariances}
\begin{tabular}{clcc}
\toprule
                       &       & \multicolumn{2}{c}{A-SGHMC} \\ \cline{3-4} 
                       &       & Acc ($\uparrow$) & NLL ($\downarrow$)         \\ \hline
\multirow{2}{*}{ipc1}  & BPC-W   &$0.2934\spm{0.0121}$             &  $2.1400\spm{0.0333}$           \\
                       & BPC-W with d.c. &$\mathbf{0.2959}\spm{0.0108}$            & $\mathbf{2.1173}\spm{0.0289}$            \\ \hline
\multirow{2}{*}{ipc10} & BPC-W   & $\mathbf{0.4890}\spm{0.0172}$            & $\mathbf{1.6971}\spm{0.0392}$            \\
                       & BPC-W with d.c. & $0.4848\spm{0.0113}$            & $1.7163\spm{0.0248}$           \\ 
\bottomrule                     
\end{tabular}

\label{tab:tm-advance}
\end{table}

\subsection{Gaussian approximation in BPC-fKL}
In this experiment, we investigate the effect of the hyperparameters of the Gaussian approximation on the performance of BPC-fKL. Firstly, we explore the number of Gaussian samples $S$ and variances $\Sigma_u^{1/2}$, $\Sigma_x^{1/2}$ in~\cref{eq:fkl_grad}. \cref{fig:nps-bpc-fkl} shows the accuracy of HMC as the function of the number of samples. Even though the estimation becomes more accurate as the number of samples increases, in \cref{fig:nps-bpc-fkl}, the number of samples does not significantly improve the performance of pseudocoresets. Thus, we use 30 samples for all the experiments. \cref{fig:variances-bpc-fkl}, we show the accuracy with varying variances. The values of the x-axis are $\Sigma_\bx^{1/2}$'s and $\Sigma_\bu^{1/2}$'s are presented as colors. As the graph shows, too small variances are not much different from using the variance of 0, and when both values are 0.01 is the best and performance drops again for the variances larger than that. So we used both $\Sigma_\bx^{1/2}$ and $\Sigma_\bu^{1/2}$ of 0.01.

\begin{figure}[t]
\vspace{-0.1in}
    \centering
    \begin{subfigure}{0.42\linewidth}
        \centering
        \includegraphics[width=0.9\columnwidth]{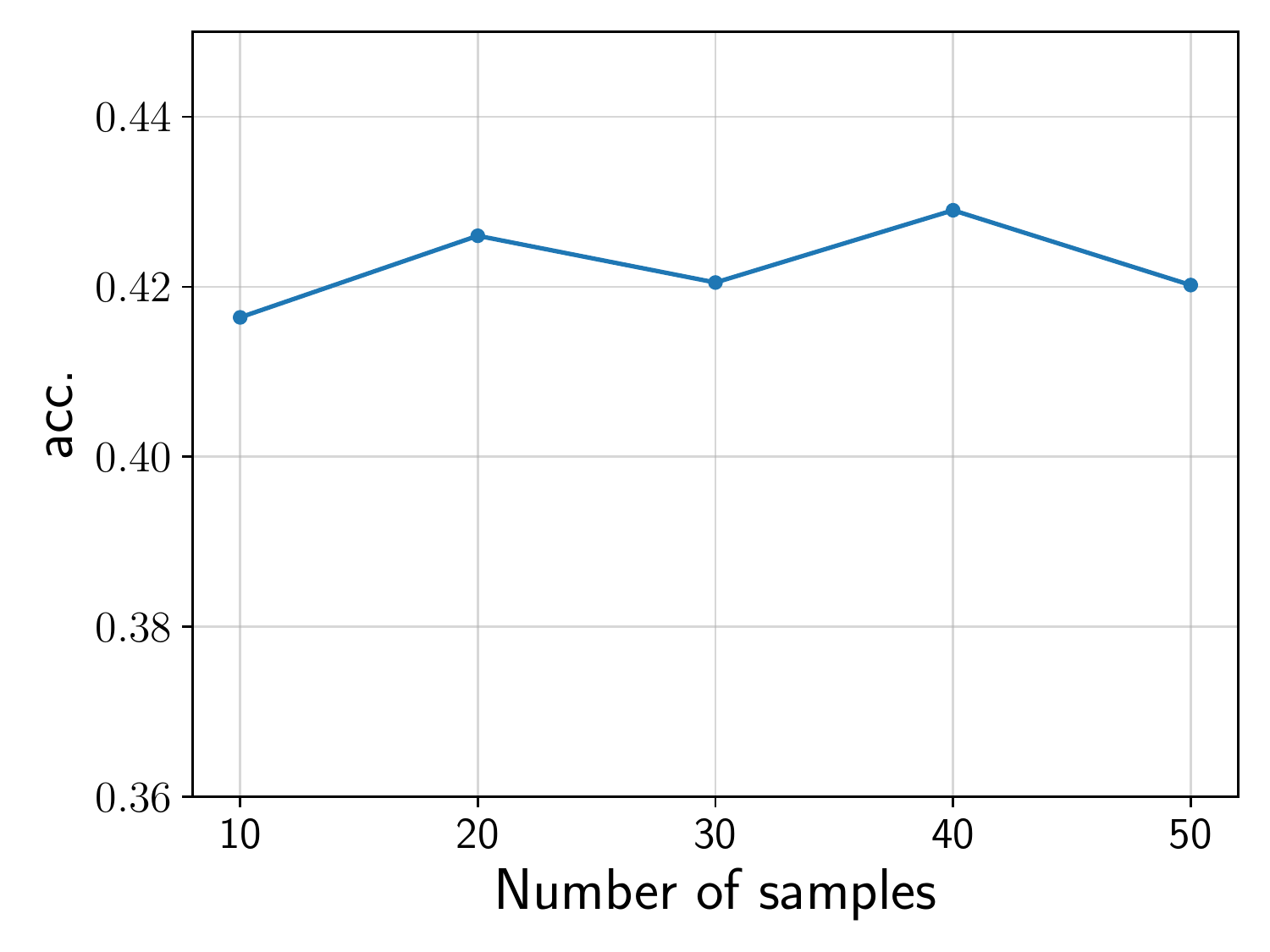}
        \caption{Number of Gaussian samples $S$}
        \label{fig:nps-bpc-fkl}
    \end{subfigure}
    \begin{subfigure}{0.42\linewidth}
        \centering
        \includegraphics[width=\columnwidth]{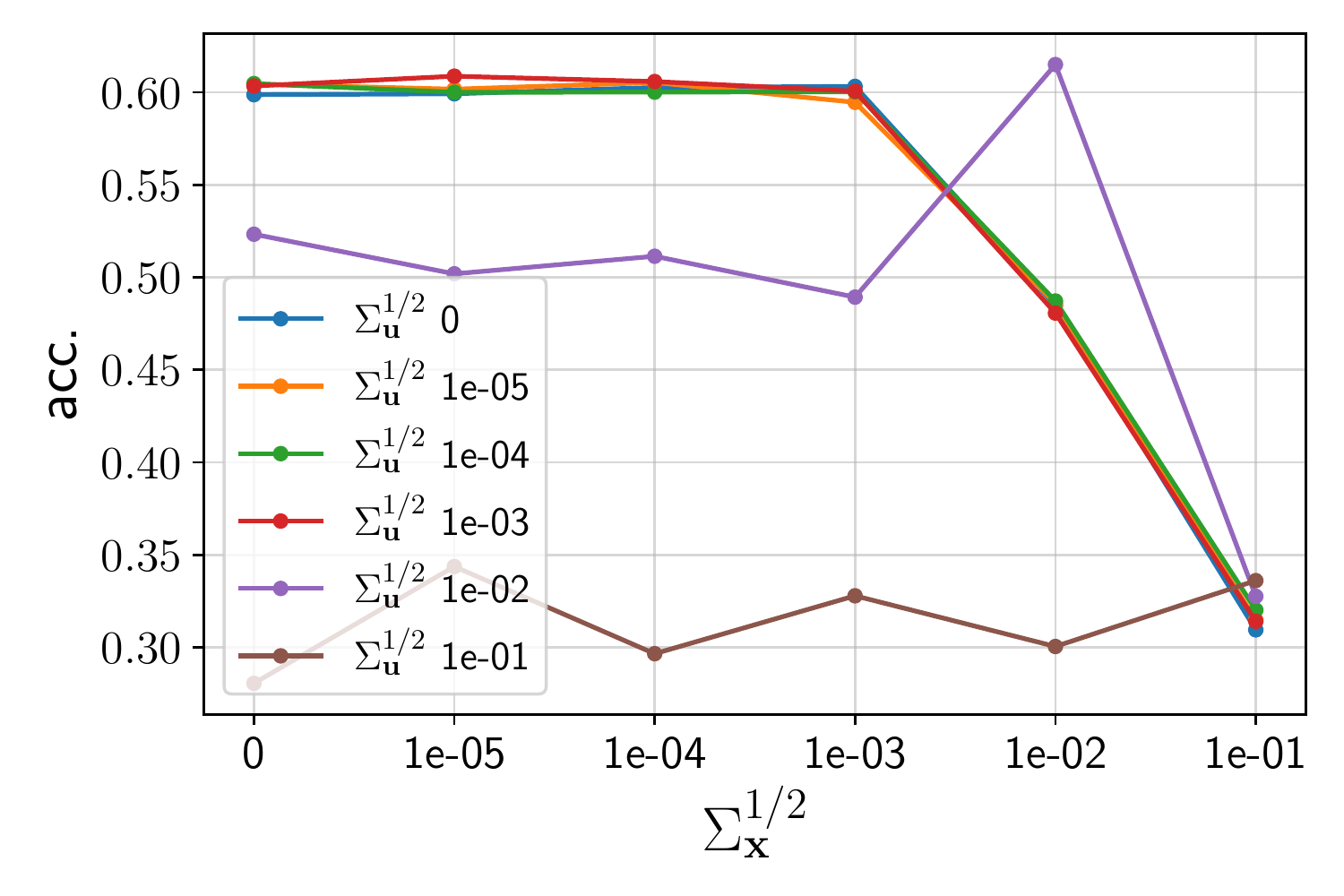}
        \caption{Variances of Gaussian $\Sigma_u^{1/2}$, $\Sigma_x^{1/2}$}
        \label{fig:variances-bpc-fkl}
    \end{subfigure}
    \caption{Exploring the hyperparameters for Gaussian approximation in BPC-fKL. The pseudocoresets of size 10 images per class for CIFAR10.}
    \label{fig:ablations}
\end{figure}

\subsection{Additional results on CIFAR10}
\cref{tab:cifar10-bigger-ipc} shows additional results for the CIFAR10 dataset when the pseudocoreset size is larger. Even in these cases, BPC-W and BPC-fKL effectively generate Bayesian pseudocoresets. Moreover, compared to \cref{tab:cifar10-all}, the 10-ipc pseudocoreset trained with BPC-fKL outperforms the 100-ipc random coreset which has 10 times more images. The overall BPC-fKL results demonstrate that the forward KL divergence is effective for constructing the Bayesian pseudocoresets. 
For evaluations, we used same hyperparameters as the case of ipc 20 as described in \cref{tab:cifar10-hps} and \cref{tab:sampling-hps}, except that $\sigma_r$ is $0.01$, $\varepsilon$ is $0.02$ and $\lambda$ is $0.1$.
\begin{table}
\centering
\small
\caption{Performance of each Bayesian pseudocoreset method with \{50, 100\} images per class (ipc) on the CIFAR10 test dataset. We present results with HMC without augmentations during training. All values are averaged over ten random seeds.}
\label{tab:cifar10-bigger-ipc}
\begin{tabular}{clcc} 
\toprule
                            &         & \multicolumn{2}{c}{HMC}                \\ 
\cline{3-4}
                            &         & Acc ($\uparrow$) & NLL ($\downarrow$)  \\ 
\midrule
\multirow{4}{*}{ipc 50}     & Random  & $0.3922\spm0.0037$   &                                        $2.1443\spm0.0373$      \\
                            & BPC-rKL & $0.3978\spm0.0143$   & $2.0692\spm0.0803$      \\
                            % & BPC-W   & $0.5188\spm0.0088$   & $1.5027\spm0.0622$      \\
                            & BPC-W   & $0.5424\spm0.0092$   & $\mathbf{1.4502}\spm0.0720$      \\
                            & BPC-fKL & $\mathbf{0.5557}\spm0.0118$   & $1.4619\spm0.0504$      \\ 
\midrule
\multirow{4}{*}{ipc 100}    & Random  & $0.4242\spm0.0209$   & 
                            $2.1430\spm0.0703$      \\
                            & BPC-rKL & $0.4220\spm0.0200$   & $2.1695\spm0.1378$      \\
                            % & BPC-W   & $0.5562\spm0.0212$   & $1.7615\spm0.0974$      \\
                            & BPC-W   & $\mathbf{0.5822}\spm0.0494$   & $1.6294\spm0.1063$      \\
                            & BPC-fKL & $0.5625\spm0.0143$   & $\mathbf{1.5841}\spm0.0728$      \\
\bottomrule
\end{tabular}
\end{table}

\paragraph{Other coreset baselines and evaluation metrics} We compared our results with other coreset baselines and other evaluation metrics for validating the quality of obtained posterior distributions more rigorously. We added Herding~\citep{welling2009herding} and K-center~\citep{Wolf2011kcenter} as another coreset baselines and for other evaluation metrics, we used expected calibration error (ECE)~\citep{naeini2015ece} and Brier score~\citep{brier1950verification}. Herding constructs coresets by gathering samples close to the centers of the feature representations for each class and K-center constructs coresets by selecting multiple center points such that the distance between each data point is maximized while the distance between centers is minimized. To obtain the feature representations for both methods, we use a pre-trained ConvNet as in \citep{hinton2002cd}. On the other hand, the ECE and Brier scores are conventional metrics for evaluating posterior qualities. \cref{tab:more-coresets} shows the HMC results of various metrics for coresets and psuedocoresets with 10 ipc. As \citep{hinton2002cd} already shows that the coreset baselines underperform pseudocoresets for SGD, \cref{tab:more-coresets} also shows that pseudocoresets are better for Bayesian inference tasks through various metrics.

\begin{table}
\centering
\small
\caption{HMC performances of the coresets and Bayesian psuedocoresets with 10 ipc on the CIFAR10 dataset. All values are averaged over ten random seeds.}
\begin{tabular}{@{}lcccc@{}}
\toprule
\multicolumn{1}{l}{} & Acc   ($\uparrow$)                   & NLL ($\downarrow$)                     & ECE ($\downarrow$)                     &Brier score ($\downarrow$)             \\ \midrule
Random               & $0.2590\spm{0.0068}$          & $2.1820\spm{0.0241}$          & $0.1385\spm{0.0052}$          & $0.8595\spm{0.0063}$          \\
Herding              & $0.3000\spm{0.0067}$          & $2.0343\spm{0.0189}$          & $0.1209\spm{0.0043}$          & $0.8231\spm{0.0055}$          \\
K-center              & $0.1739\spm{0.0048}$          & $2.3934\spm{0.0132}$          & $0.1360\spm{0.0090}$          & $0.9125\spm{0.0032}$          \\ \midrule
BPC-rKL                  & $0.3334\spm{0.0064}$          & $1.9516\spm{0.0178}$          & $\mathbf{0.1183}\spm{0.0038}$ & $0.7988\spm{0.0038}$          \\
BPC-W                    & $0.3538\spm{0.0111}$          & $1.9369\spm{0.0158}$          & $0.1457\spm{0.0110}$          & $0.8030\spm{0.0049}$          \\
\textbf{BPC-fKL}         & $\mathbf{0.4361}\spm{0.0080}$ & $\mathbf{1.7198}\spm{0.0204}$ & $0.1538\spm{0.0049}$          & $\mathbf{0.7231}\spm{0.0049}$ \\ \bottomrule
\end{tabular}
\label{tab:more-coresets}
\end{table}

\subsection{Additional results on other datasets}
In the main text, we trained Bayesian pseudocoresets only on the CIFAR10 dataset. 
To validate how well each method works on different datasets, we trained the pseudocoresets with a size of 1 image per class on other datasets, CIFAR100 and ImageNet. We can see that how well each method works when the number of classes is large with the CIFAR100 dataset and when the data dimension is large with the ImageNet dataset. The CIFAR100 dataset has the same image dimension as CIFAR10 but has 100 classes and ImageNet has 128$\times$128 data dimension. For ImageNet, we use the same existing subset of the entire dataset, ImageNette, which consists of 10 classes and increased network architecture. Following previous work~\citep{cazenavette2022tm}, we use a depth-5 ConvNet as the model architecture.
As in the results on CIFAR10, \cref{tab:cifar100-imagenette} shows all three pseudocoresets are better than a random coreset. Moreover, BPC-W and BPC-fKL outperform BPC-rKL, demonstrating that the proposed divergence measures are effective.
\begin{table}
\centering
\small
\caption{Performance of each Bayesian pseudocoreset method with 1 image per class (ipc) on the CIFAR100 and ImageNette test dataset. We present results with HMC without augmentations during training. All values are averaged over ten random seeds.}
\label{tab:cifar100-imagenette}
\begin{tabular}{clcc} 
\toprule
                            &         & \multicolumn{2}{c}{HMC}                \\ 
\cline{3-4}
                            &         & Acc ($\uparrow$) & NLL ($\downarrow$)  \\ 
\midrule
\multirow{4}{*}{CIFAR100}   & Random  & $0.0420\spm0.0025$   & $4.7063\spm0.0202$      \\
                            & BPC-rKL & $0.0460\spm0.0023$   & $4.6665\spm0.0293$      \\
                            & BPC-W   & $0.1035\spm0.0053$   & $\mathbf{4.2066}\spm0.0180$      \\
                            & BPC-fKL & $\mathbf{0.1055}\spm0.0059$   & $4.2366\spm0.0220$      \\ 
\midrule
\multirow{4}{*}{ImageNette} & Random  & $0.1572\spm0.0264$   & $2.4267\spm0.0748$      \\
                            & BPC-rKL & $0.2406\spm0.0179$   & $2.1896\spm0.0222$      \\
                            & BPC-W   & $\mathbf{0.2876}\spm0.0224$   & $\mathbf{2.0977}\spm0.0310$      \\
                            & BPC-fKL & $0.2578\spm0.0185$   & $2.1520\spm0.0325$      \\
\bottomrule
\end{tabular}
\end{table}

\subsection{Additional results on the synthetic dataset}
To validate if the posterior distribution of each algorithm learns as intended, we trained pseudocoresets on the synthetic dataset of samples from 10-dimensional multivariate Gaussian distribution, whose posterior distribution is tractable. Given data samples $\{\bx_1, \bx_2, ..., \bx_{100}\}\iidsim \normal(\theta, \Sigma)$, we trained pseudocoresets $\{\bu_m\}_{m=1}^M$ of sizes $M=\{5, 20, 40, 60, 80, 100\}$ to have similar posterior distribution of $\theta \sim \normal(\theta_0, \Sigma_0)$ with the true posterior which is tractable in this setting. Since we know that the exact posterior distribution is given by $\normal(\theta_\bu, \Sigma_\bu)$ where
\[
\Sigma_\bu &= (\Sigma_0^{-1}+M\Sigma^{-1})^{-1}, \\
\theta_\bu &= \Sigma_\bu(\Sigma_0^{-1}\theta_0+\Sigma^{-1}\sum_{m=1}^M \bu_m),
\]
we validate each method by directly calculating the divergence measures between pseudocoresets posteriors and true posterior. \cref{fig:synthetic_steps} shows that all three methods work well in that all divergences are well reduced even when trained with the algorithm with different divergence measures in this simple synthetic setting. Also, as expected, \cref{fig:synthetic_sizes} shows divergences decrease as pseudocoreset sizes increase. 
\begin{figure}[t]
\vspace{-0.1in}
    \centering
    \includegraphics[width=0.99\columnwidth]{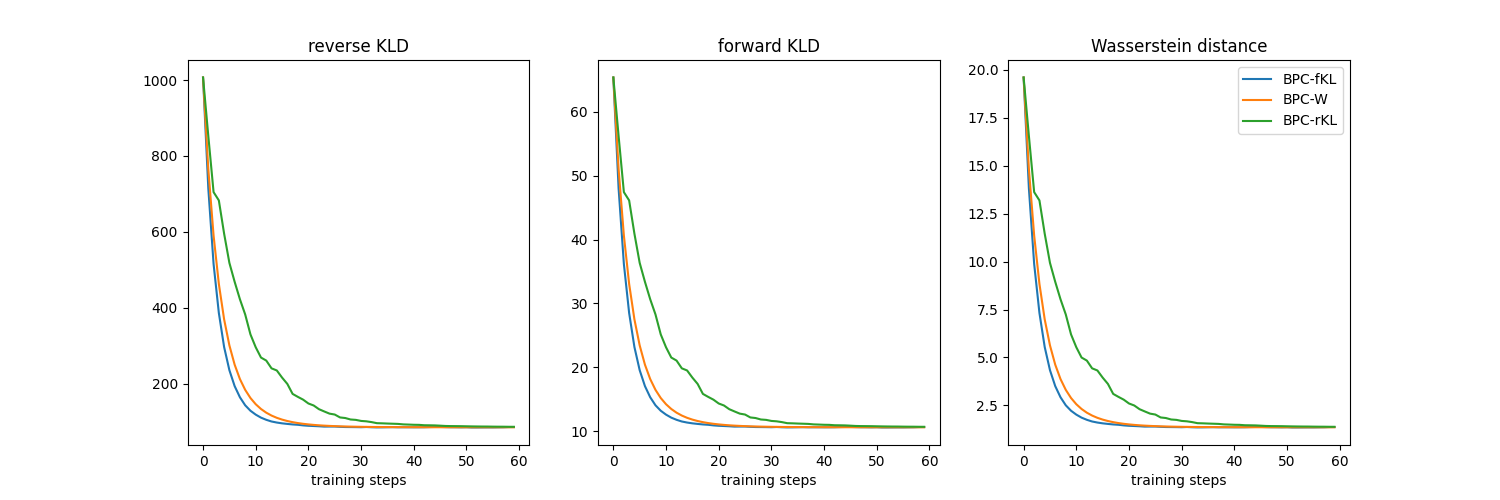}
    \caption{Divergence measures according to training steps where pseudocoreset size $M=5$. Pseudocoresets trained with different divergence measures are represented by colors.}
    \label{fig:synthetic_steps}
\end{figure}
\begin{figure}[t]
\vspace{-0.1in}
    \centering
    \includegraphics[width=0.99\columnwidth]{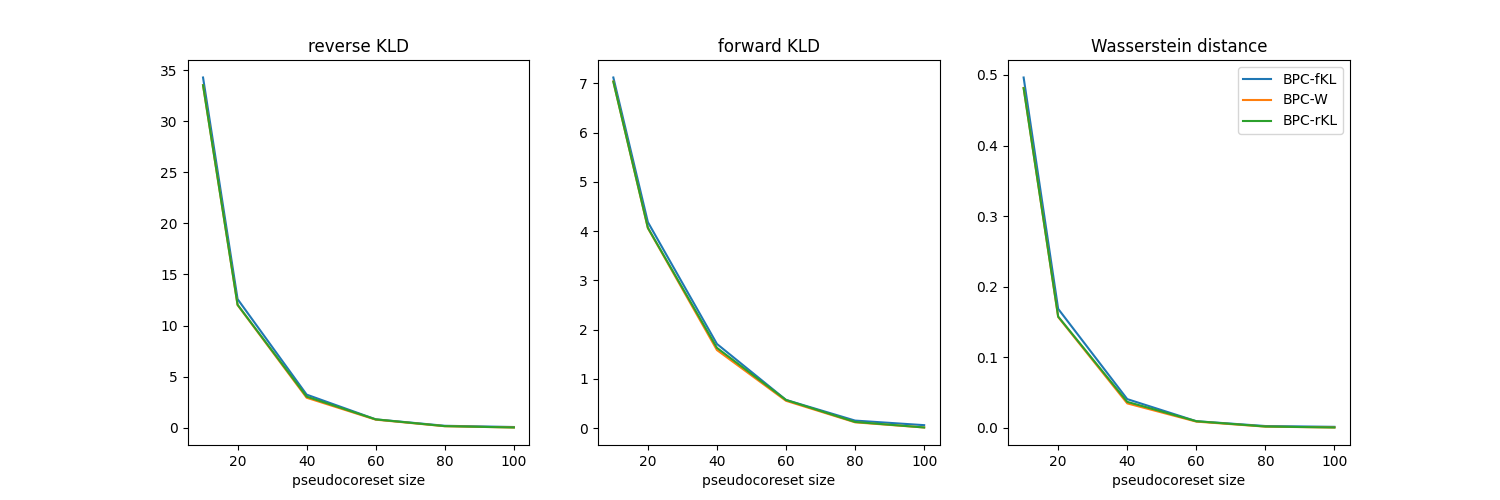}
    \caption{Divergence measures according to the pseudocoreset size. Pseudocoresets trained with different divergence measures are represented by colors.}
    \label{fig:synthetic_sizes}
\end{figure}

\section{Example images of each Bayesian pseudocoreset}
\label{app:images}
\cref{fig:example coresets} are the example images of CIFAR10 pseudocoresets of 10 images per class.

\begin{figure}[h]
\vspace{-0.1in}
    \centering
    \begin{subfigure}{0.49\linewidth}
        \centering
        \includegraphics[width=\columnwidth]{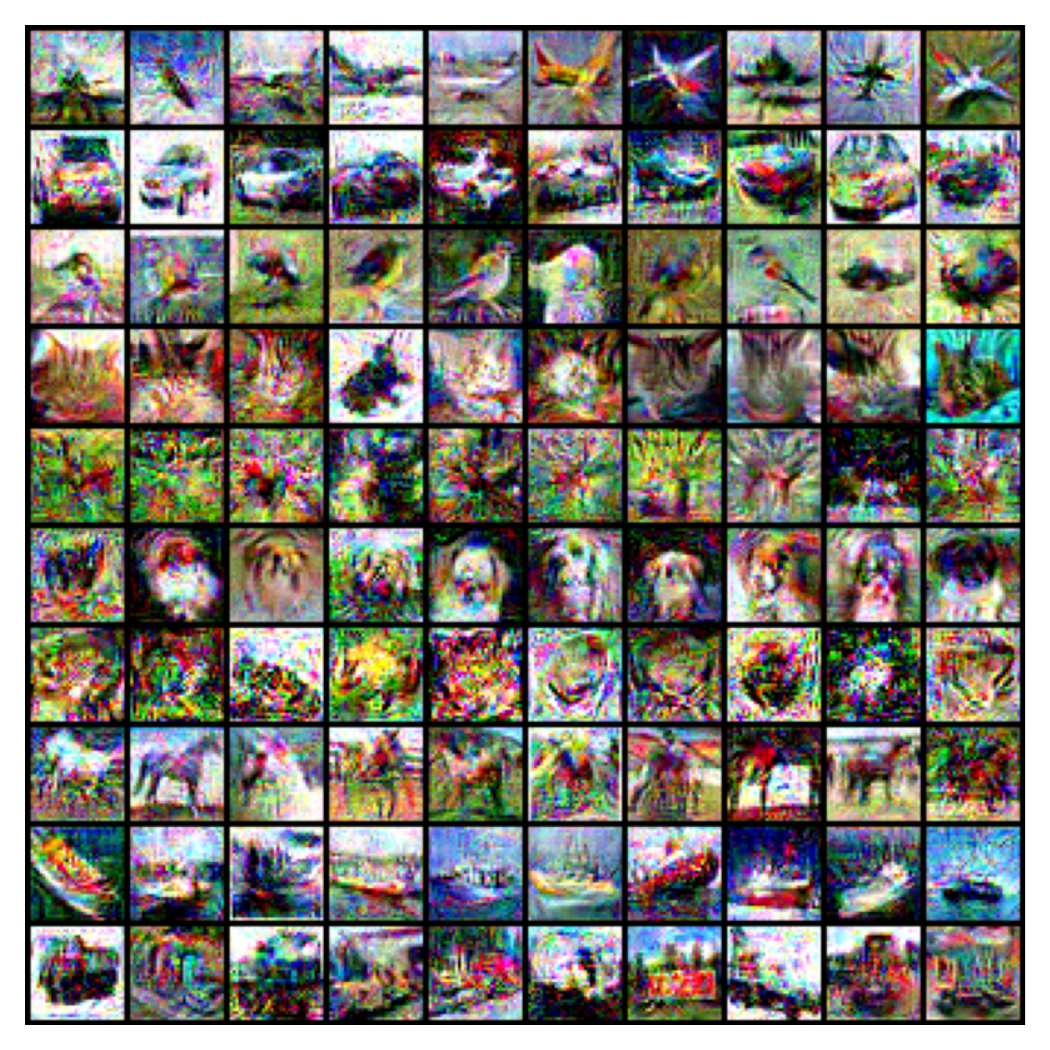}
        \caption{BPC-rKL}
        \label{fig:BPC-rKL-CIFAR10-ipc10}
    \end{subfigure}
    \hfill
    \begin{subfigure}{0.49\linewidth}
        \centering
        \includegraphics[width=\columnwidth]{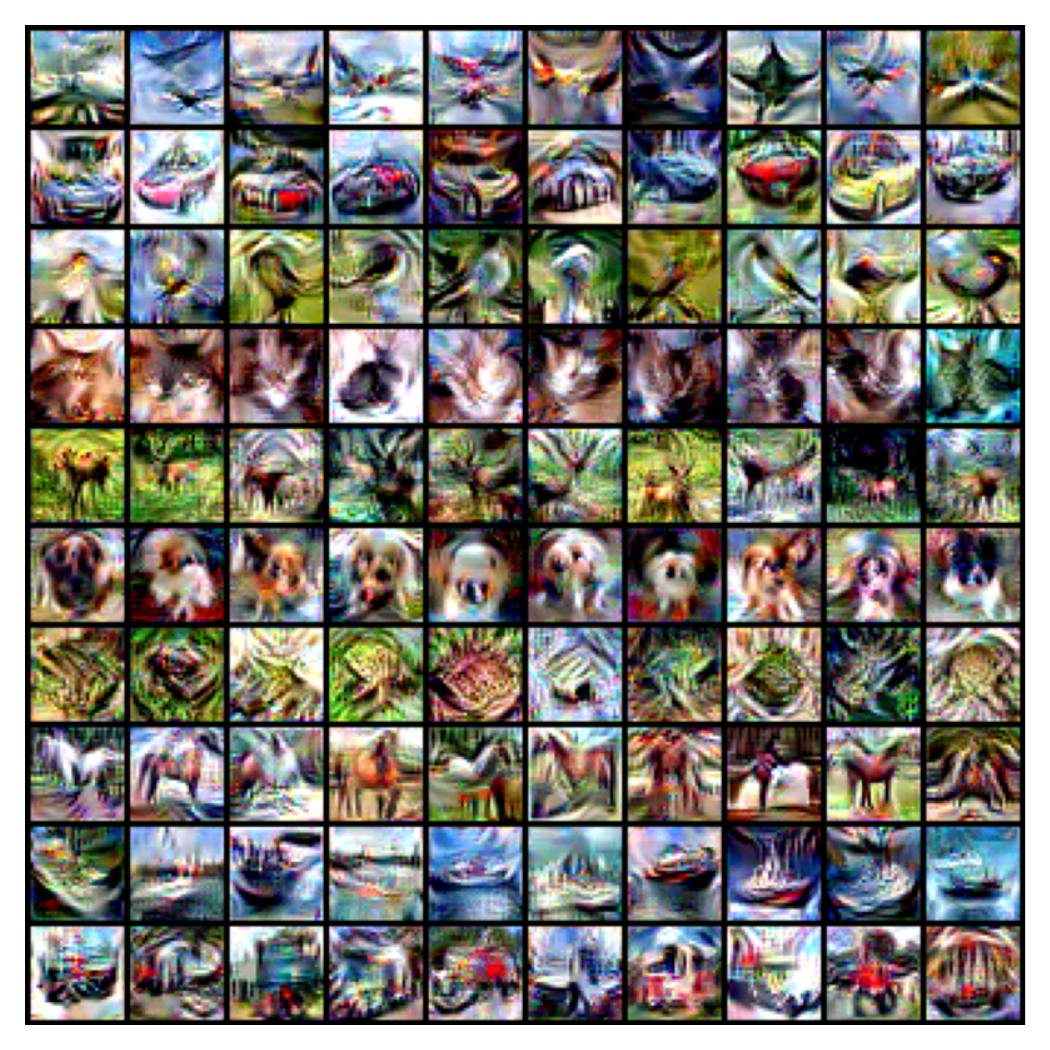}
        \caption{BPC-W}
        \label{fig:BPC-W-CIFAR10-ipc10}
    \end{subfigure}
    \vfill
    \begin{subfigure}{0.49\linewidth}
        \centering
        \includegraphics[width=\columnwidth]{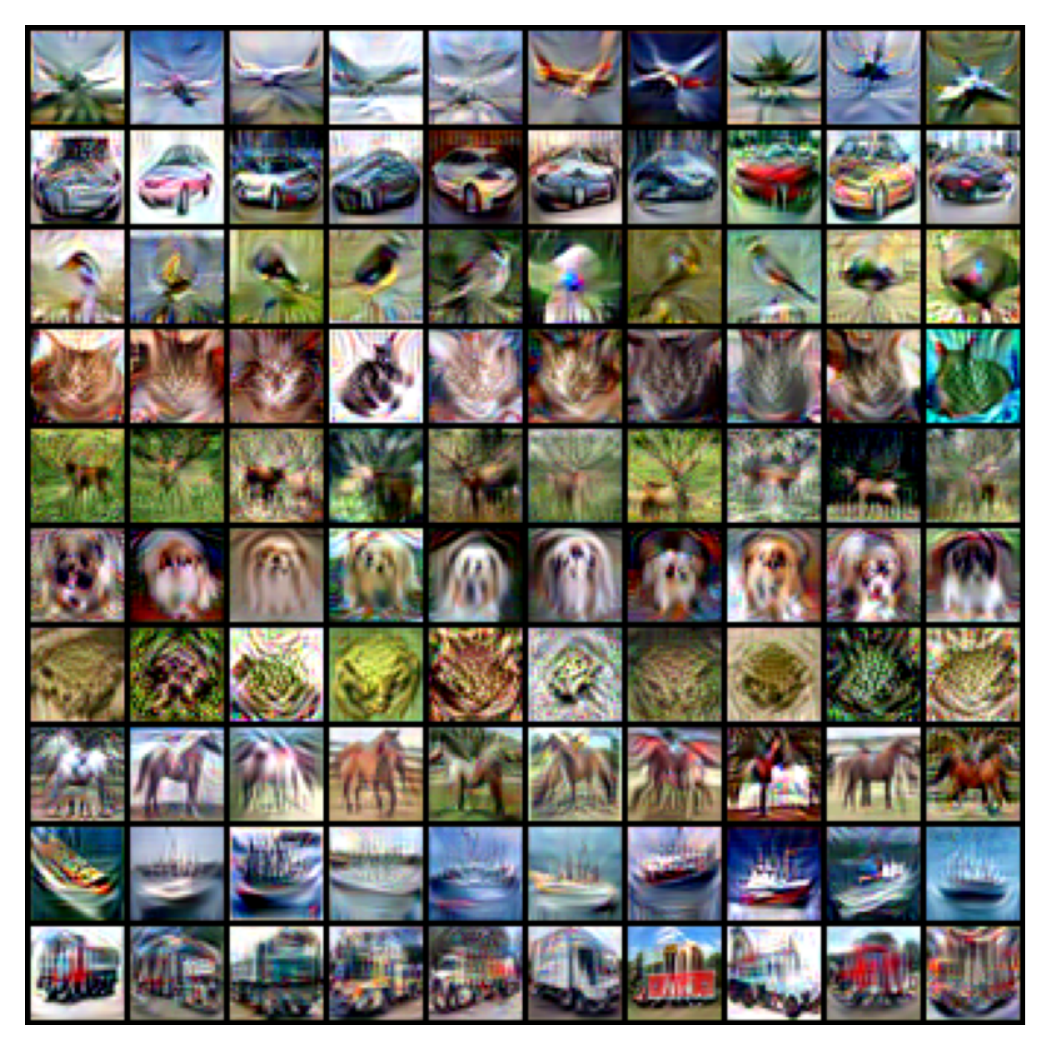}
        \caption{BPC-fKL}
        \label{fig:BPC-fKL-CIFAR10-ipc10}
    \end{subfigure}
    \caption{Examples of Bayesian pseudocoresets. Each row is the pseudocoresets for each class.}
    \label{fig:example coresets}
\end{figure}

\end{document}